%% file: arxiv.tex
\documentclass{article}

\PassOptionsToPackage{numbers}{natbib}

\usepackage[preprint]{neurips_2026}

\usepackage[utf8]{inputenc} 
\usepackage[T1]{fontenc}    
\usepackage{hyperref}       
\usepackage{url}            
\usepackage{booktabs}       
\usepackage{amsfonts}       
\usepackage{amsmath}
\usepackage{amsthm}
\usepackage{nicefrac}       
\usepackage{microtype}      
\usepackage{xcolor}         
\usepackage{natbib}
\setcitestyle{numbers}
\usepackage{graphicx}
\usepackage{algpseudocode}
\usepackage{algorithm}
\usepackage{subcaption}
\usepackage[rightcaption]{sidecap}

\usepackage{cleveref}

\title{Set-Valued Policy Learning} 

\usepackage{wrapfig}

\DeclareMathOperator*{\argmax}{argmax}
\DeclareMathOperator*{\maxmin}{maxmin}

\newcommand{\bone}{\mathbf{1}}

\newtheorem{theorem}{Theorem}

\newtheorem{proposition}{Proposition}
\newtheorem{assumption}{Assumption}
\newtheorem{definition}{Definition}
\newtheorem{remark}{Remark}

\usepackage{xcolor}

\definecolor{LauraColor}{RGB}{180, 40, 40}
\definecolor{MathieuColor}{RGB}{40, 90, 180}
\definecolor{GaelleColor}{RGB}{120, 70, 170}
\definecolor{AntoineColor}{RGB}{20, 130, 90}
\definecolor{UriColor}{RGB}{210, 110, 20}
\definecolor{JulieColor}{RGB}{170, 50, 130}

\author{%
  Laura Fuentes-Vicente\\ 
  Inria PreMeDICaL, Inserm,\\
  Montpellier, France \\
  \texttt{laura.fuentes-vicente@inria.fr} \\
\And
  Mathieu Even \\
   Inria PreMeDICaL, Inserm, \\
   Montpellier, France \\
  \AND
   Gaëlle Dormion\\
   Elixir Health,\\
   Paris, France\\
  \And
   Antoine Chambaz\\
  Université Paris Cité, CNRS, MAP5, \\
  F-75006 Paris, France \\
  \And
  Uri Shalit \\
  Tel-Aviv University, \\
  Tel-Aviv, Israel \\
  \And
  Julie Josse \\
  Inria PreMeDICaL, Inserm, \\
   Montpellier, France \\
}

\begin{document}

\maketitle

\begin{abstract}
Conventional treatment policies map patient covariates to a single recommended intervention in order to maximize expected clinical outcomes. Although a rich body of causal inference methods has been developed to estimate such policies, point-valued recommendations can be highly sensitive to estimation uncertainty, model specification, and finite-sample variability, while typically providing little guidance about how confident one should be in the recommended action.
In this work, we propose a set-valued policy learning paradigm for the multiple-treatment setting, in which policies output a set of plausible treatments rather than a single recommendation. This formulation enables intrinsic uncertainty quantification, with the size of the predicted set reflecting the degree of decision ambiguity.
We extend the learning-to-defer framework to multiple treatments via a novel \textit{greatest Lower Bound} method, and introduce \textit{conformal policy learning}, which bridges the gap between unobserved ground-truth optimal treatments and estimated optimal treatment rules. Drawing on insights from the noisy-label literature, we develop a randomness-injection approach that guarantees marginal coverage without requiring assumptions on underlying black-box optimal treatment rules.
Through experiments on synthetic data and a real-world application to In-Vitro Fertilization (IVF), we demonstrate that our methods produce robust and actionable policies that naturally incorporate clinical considerations while effectively balancing performance and reliability.
\end{abstract}


\section{Introduction}
\label{sec:introduction}

Precision medicine aims to tailor treatment decisions to individual patient characteristics, moving beyond the traditional one-size-fits-all paradigm. This shift has led to a growing body of work in the causal inference literature on learning personalized treatment policies \citep{wager2018estimation, wright2017ranger, sverdrup2020policytree, nordland2022policy}, which output a single treatment recommendation for each patient, with the goal of maximizing expected outcomes. The clinical utility of these methods ultimately depends on their integration into real-world practice and by extension, on the human-AI interactions that govern decision-making, a topic that has received increasing attention in recent literature \citep{lai2021human, ben2025does, imai2023experimental, pmlr-v119-mozannar20b}. 
However, different policy learners trained on the same data can produce conflicting recommendations for the same individual, raising concerns about reliability and trust. Moreover, when multiple treatments yield comparable outcomes, policies fail to capture this ambiguity and may conflict with practitioner preferences which may take into account unquantified cost or adverse events, limiting practical adoption. 

To address these limitations, we instead learn \textit{set-valued} policies in multi-treatment settings. Rather than issuing a single recommendation, such policies output a subset of plausible treatments. Our objective is to construct set-valued policies that contain optimal treatments for each patient with high probability, shifting the role of the model from prescribing a decision to supporting it. Instead of making the final choice, the policy provides a filtered set of options to a decision-maker, as recently advocated by \cite{hullman2025conformal}. By returning a set of treatments, the framework naturally captures uncertainty, reduces the burden of strict automation, and enables the incorporation of secondary objectives and contextual considerations, thereby facilitating more flexible and trustworthy decision-making.

Learning such policies with formal guarantees is fundamentally challenging due to the nature of causal inference. For each patient, only the outcome of the administered treatment is observed, while counterfactual outcomes are not. As a result, the optimal treatment is not directly available. This missing data problem complicates the construction of valid set-valued policies, since coverage guarantees must target an inherently unobservable quantity. Furthermore, while set-valued policies provide richer, individualized information, the variability they introduce in downstream decisions poses additional challenges for evaluation beyond standard policy value metrics.

\textbf{Related work:}
Conformal prediction (CP) \citep{algolrvovk}  has emerged as a cornerstone of uncertainty quantification, shifting the focus from single-point estimates to prediction sets with finite-sample coverage guarantees. 
A growing body of literature has extended conformal prediction to create prediction intervals for causal quantities. 
Examples include counterfactual outcomes \citep{cicounterfactsandITE}, continuous treatment effects \citep{verhaeghe2024cpdoseresponsemodels, schröder2025cpce}, and treatment effects on ordinal outcomes \citep{lu2018treatment} among others. Particular efforts have addressed individual treatment effects (ITEs) \citep{cicounterfactsandITE, kivaranovic2020conformal, chernozhukov2023toward,sensitivitycpITE, jonkers2025confconvmc}. These methods address the unavailability of the ITEs by applying weighted conformal prediction to nuisance functions estimated from observed quantities. 
Unlike the others, \citet{alaa2023confmetalrpredinf} applies CP to estimates of the unobserved target quantity (the ITE), a strategy reminiscent of the noisy label literature \citep{einbinderlabelnoise, cauchoisweaksupervision, sesia25adaptative}. In \cite{alaa2023confmetalrpredinf}, coverage guarantees rely on the robustness properties of two-stage learners (e.g.\ X-learner \citep{kunzel2019metalearners}, DR-learners \citep{kennedy2023towards}) used to construct ITE estimators. Such prediction intervals could be leveraged to construct set-valued policies, in the binary treatment setting, via \textit{bound policies} \citep{pmlr-v119-mozannar20b, ghoummaid2024act}, recommending both treatments when the prediction interval contains zero. 
While our work similarly employs the noisy label framework to capture the inherent unknowability of the true optimal treatment, we depart from \cite{alaa2023confmetalrpredinf}, in two substantial ways. First, we operate within a conformal classification framework extending beyond the binary treatment setting. Second, we do not rely on robustness properties of the estimation process to derive coverage guarantees. 
Other conformal prediction extensions have been developed for policy \emph{evaluation}, providing predictive intervals around the policy value \citep{taufiq2022confpolicypred, zhang2023confoffpolicypred, dai2020coindice}. A notable line of research in contextual bandits aims to assist decision-makers by learning sets of treatments using conformal prediction \citep{babbar2022utilitypredictionsetshumanai, straitouri2023improvingexpertpredictionsconformal, straitouri2023designing, de2024towards, kiyani2025decision}. However, the dependency on decision-maker's feedback obscures applicability to our specific setting, where such decisions remain unavailable. 
To the best of our knowledge, our work is the first to learn set-valued policies with coverage guarantees within the causal policy learning framework. 

\textbf{Contributions:}
In this work, we introduce a novel paradigm for \textit{set-valued policy learning}, in which a learning algorithm outputs a \emph{set} of candidate treatments rather than a single decision. This framework naturally quantifies uncertainty via set cardinality while providing downstream decision-makers the flexibility to select treatments based on their own preferences. 
Our approach can be viewed as a principled generalization of the ``learning to defer'' paradigm to multiple treatments, situating it within the broader literature on human-AI interactions.
To support this setting, we propose \textit{set-policy values} (\Cref{subsec:set-policy:value}), a new evaluation criterion that captures both the utility and informativeness of set-valued decisions. We then develop two methods for learning set-valued policies with formal coverage guarantees: \textit{(i)} a \textit{greatest lower bound} (GLB) approach (\Cref{subsec:greatest-lower-bound}), based on upper- and lower-confidence bounds on conditional outcomes, which achieves conditional coverage; and \textit{(ii)} a \textit{conformal set-valued policy learning} method (\Cref{sec:Conformal policy learning}) that leverages conformal prediction techniques under label noise \citep{einbinderlabelnoise, cauchoisweaksupervision, sesia25adaptative}.
To address the challenges posed by noisy or imperfect outcome observations, we further introduce two conditions for conformal set-valued policy learning: one relying on a mild structural assumption, and another based on injecting controlled randomness during training to favor exploration and avoid over-confidence. Notably, our framework is model-agnostic and can be combined with arbitrary black-box policy learning methods without requiring specialized robustness modifications. We empirically demonstrate that our methods produce reliable and actionable treatment sets across a range of synthetic experiments, and showcase their practical relevance on a real-world in-vitro fertilization dataset. To facilitate reproducibility and future research, we provide a dedicated package and code at \href{https://github.com/laufuentes/setValuedPolicyLearning}{\texttt{setValuedPolicyLearning}}. 

\section{Background and Notations}
\label{sec:background}


We assume access to a sample of $n$ i.i.d. observations $\{\mathcal{D}_{i}=(X_{i},A_{i},Y_{i})\}_{i\in [n]}$ drawn from a distribution $P$. Each observation $\mathcal{D}_{i}$ ($i\in [n]$) comprises: a vector of covariates $X_{i} \in \mathcal{X}\subseteq \mathbb{R}^{d}$, a treatment assignment indicator $A_{i}\in \mathcal{A}=\{1,\ldots, K\}$, with $K>1$, and an outcome $Y_{i}\in \mathcal{Y}\subseteq \mathbb{R}$. By convention, larger values correspond to better outcomes. 
We adopt the potential outcome framework \citep{rubin_potential_outcome} and introduce the tuple $(Y_{i}(1),\ldots,Y_{i}(K))$ of potential outcomes  associated to all treatment levels, with $Y_i(a)$  the outcome one would have observed, had patient $i\in[n]$ received treatment assignment $a\in\mathcal{A}$.
We denote $P_{X}$ the marginal law of $X_i$ (the same for all $i\in[n]$ under i.i.d.~assumption).
We make the following classical identifiability assumptions.

\begin{assumption}[Standard causal inference assumptions]
\label{ass:ci}
For all patients $i\in[n]$,\\ $Y_i=Y_i(A_i)$ \emph{(Consistency}, or \emph{SUTVA)};
for all $(x,a)\in\mathcal{X}\times\mathcal{A}$, $P(A_i=a|X_i=x)>0$ \emph{(Overlap)};
$(Y_i(1),\ldots, Y_i(K))\perp A_i|X_i$ \emph{(Unconfoundedness)}.
\end{assumption}

\textbf{Policy learning and its value:}
A policy $\pi$ is an element of the policy class $\Pi = \mathcal{A}^{\mathcal{X}}$. Its performance can be evaluated through its \emph{value},  $E_{X\sim P_{X}}[Y(\pi(X))]$, interpreted as the average outcome in a world where the policy's recommendations are followed.
Although the policy value is a causal estimand relying on counterfactual outcomes, it can be identified under \Cref{ass:ci} as:
\begin{equation}
\label{eq:policy:value}
\textstyle
     \mathcal{V}_{P}(\pi) = E_{X\sim P_{X}}[\mu(X,\pi(X))]\,,
\end{equation}
where $\mu(X,a) = E_{P}[Y|X, A=a]$. We define a \textit{value-optimal policy} $\pi^{\star}$ as any maximizer of the mapping $\Pi \ni \pi \mapsto \mathcal{V}_{P}(\pi)$, which then characterizes the $X$-specific set of optimal treatment regimes: 
\begin{equation}\label{eq:pi_star}
\textstyle
    \Pi^{\star}(X)=\{\pi^{\star}(X):\pi^{\star}\in \argmax_{\pi\in \Pi}\mathcal{V}_{P}(\pi) \}\,.
\end{equation} 
Value-optimal policies are typically estimated via two primary paradigms: direct maximization of a consistent estimator of the policy value (\Cref{eq:policy:value}) \citep{qiandirect2011, dudik2011doubly} or indirect maximization of an estimator of the conditional mean outcome.

\textbf{Conformal prediction:}
Conformal prediction (CP) \citep{saunders1999transduction,tutorialcp} is a model-agnostic and distribution-free framework for uncertainty quantification, providing prediction sets with guaranteed marginal coverage properties. Formally, CP operates on a dataset of $n$ i.i.d.\ samples $\{\mathcal{O}_{i} =(X_{i},T_{i}) \}_{i\in [n]}$ and a test point $\mathcal{O}_{n+1}=(X_{n+1},T_{n+1})$, all sampled from an unknown distribution $P_{X}P_{T|X}$. Each data structure $\mathcal{O}_{i}$ comprises a covariate vector $X_{i}\in \mathcal{X}$ and a categorical label $T_{i}\in \mathcal{T}=[K]$.  
\textit{Inductive CP} (ICP) \citep{inductiveconfpapadopoulos}, or split conformal prediction, partitions the data into two disjoint subsets  indexed by $\mathcal{I}_{\mathrm{train}}$ and $\mathcal{I}_{\mathrm{cal}}$. Let  $n_{\mathrm{cal}}$ denote the cardinality of $\mathcal{I}_{\mathrm{cal}}$. The training subset $\{\mathcal{O}_{i}:i\in \mathcal{I}_{\mathrm{train}}\}$ is used to fit a nonconformity score function $s:\mathcal{X}\times \mathcal{T} \rightarrow \mathbb{R}$, with $s(x,t)$ quantifying the alignment of a candidate sample $(x,t)$ relative to the training data (the lower, the more aligned, by convention). The fitted nonconformity score is evaluated on the calibration subset, yielding $\{S_{i}=s(X_{i},T_{i})\}_{i\in \mathcal{I}_{\mathrm{cal}}}$. For a fixed level $\alpha\in [0,1]$, let
\begin{equation}
\label{eq:CP:clean:quantile}
    q_{1-\alpha} = \mathrm{Quantile}\left(\tfrac{\lceil(1-\alpha)(1+n_{\mathrm{cal}})\rceil}{n_{\mathrm{cal}}}; \{S_{i}\}_{i \in \mathcal{I}_{\mathrm{cal}}}\right)
\end{equation} denote the $(1-\alpha)$-quantile of the nonconformity scores. The prediction set associated to the test point $X_{n+1}$ is then defined as
\begin{equation}
\label{eq:split:CP}
    C_{n}^{\alpha}(X_{n+1})=\{t\in \mathcal{T}: s(X_{n+1},t) < q_{1-\alpha}\}\subseteq \mathcal{P}(\mathcal{T}).
\end{equation}
Under exchangeability of the data, this construction guarantees finite-sample marginal coverage, \textit{i.e.}
\begin{equation}
\label{eq:marginal:cov}
    P(T_{n+1}\in C_{n}^{\alpha}(X_{n+1})) \geq 1-\alpha,
\end{equation}
without imposing distributional assumptions on the data-generating process. 

\textbf{Noisy label regime:} In many cases high quality labels reflecting ground truth are unavailable, and one needs to rely instead on noisy surrogates such as user-supplied annotations or estimated labels \citep{cauchoisweaksupervision, cheng2022many}. In this setting, known as the \textit{noisy label regime} \cite{einbinderlabelnoise}, one can employ inductive conformal prediction to construct prediction sets. 
The label shift from the inaccessible ground truth labels to noisy labels as proxies  alters the distribution of nonconformity scores, ultimately resulting in modified prediction sets. Although these prediction sets maintain valid coverage for noisy labels by construction, the guarantees do not extend to ensure coverage of the ground truth labels without further assumptions. Recent works by \citet{alaa2023confmetalrpredinf,sesia25adaptative} address this coverage gap 
under certain assumptions on the noisy conformity scores. Specifically, they leverage notions of stochastic dominance to ensure that the distribution of noisy nonconformity scores acts as a conservative upper bound for the true one, effectively incorporating the label uncertainty into the prediction set. Refer to \Cref{app:noisy_ICP} for a detailed overview on the noisy label regime. 

\section{Problem setup}
\label{sec:problem:setup}

\subsection{Set-valued policies}
\label{subsec:SVPL}

Estimating an optimal policy when many actions are available is notoriously difficult. Two key challenges are the absence of ground-truth optimal actions and the fact that several actions may yield indistinguishable outcomes, making the learned policy sensitive to noise and modeling choices.
Indeed, in practice we often see that  policies estimated by different methods do not always agree even when they are learned from the same data. Such discrepancies, combined with possible conflicts with decision-makers' preferences, raise trust concerns, hindering adoption. While uncertainty can be assessed by aggregating multiple policies into a treatment distribution, such an approach lacks formal coverage guarantees. \textit{Set-valued policies} instead aim to support decision-makers by presenting a filtered set of treatment options, as opposed to a single recommendations, with coverage guarantees. Similar to the \textit{learning-to-defer} framework \citep{pmlr-v119-mozannar20b, ghoummaid2024act}, set-valued policies provide a quantification of task difficulty through the set cardinality. By delegating the final choice to the decision-makers, such policies facilitate flexible, context-aware decisions incorporating secondary objectives such as budget constraints \citep{zbMATH07938665, sun2021treatment}, adverse events \cite{fuentes2026policy}, etc. A set-valued policy is formally defined as follows.

\begin{definition}[Set-valued policy]
A set-valued policy $C$ is an element of $\mathcal{P}(\mathcal{A})^{\mathcal{X}}$ that maps covariates to a set of treatments. 
\end{definition}

The aim of our work is to construct set-valued policies that contain an optimal treatment with high probability: given a user-supplied $\alpha\in [0,1]$, we seek to construct a set-valued policy $C$ such that:
\begin{equation}
\label{eq:marginal:cov:A}
    P(\Pi^{\star}(X_{n+1}) \cap C(X_{n+1}) \neq \emptyset) \geq 1-\alpha\,.
\end{equation}
Note that $\Pi^\star$ (\Cref{eq:pi_star}) is a set-valued policy, with coverage of 1.
Finally, we emphasize that the \emph{set-valued policy learning} paradigm naturally generalizes learning-to-defer to settings with multiple treatments. In the binary case $\mathcal{A}=\{0,1\}$, defining the deferring set as $\perp=\{\emptyset,\{0,1\}\}$, a set-valued policy reduces to a mapping $C:\mathcal{X}\to \{0,1\}\cup\perp$. In this formulation, $C$ recovers the standard learning-to-defer behavior: it outputs a singleton when confident, and defers otherwise.
In contrast, in the multi-treatment regime ($|\mathcal{A}|>2$), lack of confidence needs not result in full deferral over $\mathcal{A}$. A set-valued policy constitutes a richer paradigm than a naive adaptation of learning-to-defer to multiple treatments, as it enables deferral of a strict subset of $\mathcal{A}$ rather than an all-or-nothing handoff. 

\subsection{Set-valued policy evaluation} 
\label{subsec:set-policy:value}
How to assess the marginal coverage of a set-valued policy $C$ (left-hand side of \Cref{eq:marginal:cov:A}) is unclear in the absence of knowledge on $\Pi^{\star}$. Moreover, while defining the value $\mathcal{V}_{P}(\pi)$ of a (standard) policy $\pi$ is straightforward \eqref{eq:policy:value}, assessing the value of a set-valued policy proves more subtle.

\begin{definition}[Choice function and set-valued policy value]
   Let $\delta:(\mathcal{P}(\mathcal{A}),\mathcal{X})\to \mathcal{A}$, a \emph{choice function}, be a randomized mapping satisfying $\delta(\mathcal{A}',x)\in \mathcal{A}'$ for any $\mathcal{A}'\subset\mathcal A$ and $x\in\mathcal{X}$. The $\delta$-specific value of a set-valued policy $C$ is defined as $E_{X\sim P_{X}}\left[Y(\delta(C(X),X)) \right]$. It is the value of the policy $\pi:x\mapsto \delta(C(x),x)$.
\end{definition}
Assuming the choice function $\delta$ known, the set-policy value is identifiable under \Cref{ass:ci} as $\mathcal{V}_{P}(C,\delta)= E_{X\sim P_X}\left[\mu(X,\delta(C(X),X)) \right]$. It can be readily estimated using classical policy value estimators.
Consider a decision-maker whose strategy to select a single treatment from the output of a set-valued policy $C$ is given by the choice function $\delta_{*}$. In their hands, the value of $C$ is $\mathcal{V}_{P}(C,\delta_{*})$. In some clinical context, practical guidelines, such as dose de-escalation, can be formalized as decision strategies. For example,  $\delta_{\mathrm{lower}}$ characterizes a strategy selecting the minimum dose from the predicted set of possible doses $C(x)$, as we will see in our IVF application (\Cref{subsec:ivf})). However, in most scenarios, such strategies cannot be determined in advance. 
To address this, we introduce the \textit{uniform set-policy value} $\bar{\mathcal{V}}_{P}(C)$, corresponding to $\delta_\mathrm{unif}(C(x),x)\sim\mathcal{U}(C(x))$ independently of $x$, and defined as: 
\begin{equation}
\textstyle
    \bar{\mathcal{V}}_{P}(C) = E_{X\sim P_X}\left[ \tfrac{1}{|C(X)|}\sum_{a\in C(X)} \mu(X,a)\right] \,.
\end{equation}
Under \Cref{ass:ci}, it is the value of the \emph{randomized} policy which samples uniformly one treatment from the set of treatments recommended by $C$.
If the decision-maker performs better than choosing uniformly at random from the recommended treatment set, then $\bar{\mathcal{V}}_{P}(C)$ serves as a proxy and a lower-bound for the decision-maker's set-policy value $\mathcal{V}_{P}(C,\delta_{*})$.
\begin{proposition}
\label{prop:lower:bound:SPV}
    If the decision-maker's choice function $\delta_{*}$ outperforms uniform sampling, 
    in the sense that for all $x\in\mathcal{X}$,
    $E[\mu(x,\delta_\mathrm{unif}(C(x),x))]\leq E[\mu(x, \delta_{*}(C(x),x))]$, then $\bar{\mathcal{V}}_{P}(C)\leq \mathcal{V}_{P}(C,\delta_{*})$.
\end{proposition}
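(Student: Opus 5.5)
The plan is to write both sides as expectations over $X\sim P_X$ of an $x$-specific conditional quantity, and then integrate the pointwise inequality assumed in \Cref{prop:lower:bound:SPV}. Throughout, $C$ is a fixed set-valued policy, and we assume $C(x)\neq\emptyset$ so that $\delta_\mathrm{unif}$ is well defined.

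\textbf{Step 1 (uniform side).} Fix $x\in\mathcal{X}$. Since $\delta_\mathrm{unif}(C(x),x)\sim\mathcal{U}(C(x))$, independently of everything else, we have
\[
E[\mu(x,\delta_\mathrm{unif}(C(x),x))]=\tfrac{1}{|C(x)|}\sum_{a\in C(x)}\mu(x,a).
\]
Integrating over $X\sim P_X$ then gives $\bar{\mathcal{V}}_{P}(C)=E_{X\sim P_X}\big[E[\mu(X,\delta_\mathrm{unif}(C(X),X))\mid X]\big]$. Here the inner expectation is taken only over the randomization of the choice function, which is independent of $X$.

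\textbf{Step 2 (decision-maker side).} By definition, and under \Cref{ass:ci},
\[
\mathcal{V}_{P}(C,\delta_*)=E_{X\sim P_X}[\mu(X,\delta_*(C(X),X))].
\]
The outer expectation also averages over any internal randomization of $\delta_*$. By the tower property, conditioning on $X$, this equals $E_{X\sim P_X}\big[E[\mu(X,\delta_*(C(X),X))\mid X]\big]$. The inner term evaluated at $X=x$ is exactly the quantity $E[\mu(x,\delta_*(C(x),x))]$ appearing in the hypothesis. For this identification we need the randomization of $\delta_*$ to depend on $X$ only through its arguments, which is implicit in the definition of a choice function.

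\textbf{Step 3 (integrate).} The hypothesis gives a pointwise inequality between the two inner conditional expectations for every $x$. Monotonicity of the integral with respect to $P_X$ then yields $\bar{\mathcal{V}}_{P}(C)\le\mathcal{V}_{P}(C,\delta_*)$.

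The only genuine obstacle is measure-theoretic bookkeeping. We must ensure that both integrands are measurable and integrable, for instance by assuming $\mu$ is bounded or $Y$ is integrable. We must also justify the conditioning step in Step 2, namely that the randomization of $\delta_*$ is conditionally independent of the potential outcomes given $X$. Once these standing regularity conditions are accepted, the argument is a one-line integration of the assumed pointwise inequality.
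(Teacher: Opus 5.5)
Your proposal is correct and follows essentially the paper's route: identify $\bar{\mathcal{V}}_P(C)$ and $\mathcal{V}_P(C,\delta_*)$ as $P_X$-averages of the two sides of the assumed pointwise inequality, then integrate. Your additional remarks (nonempty $C(x)$, integrability, and the randomization of $\delta_*$ depending on $X$ only through its arguments) make explicit regularity conditions that the paper leaves implicit; they do not change the argument.
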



\subsection{Greatest lower bound method}
\label{subsec:greatest-lower-bound}

We now introduce our first set-valued policy learning approach, which hinges on uncertainty quantification relative to estimators of 
$\mu$ to ensure proper coverage, as defined in \Cref{eq:marginal:cov:A}. 
The implicit assumption behind this approach is that we can rely on trusted confidence lower- and upper-bounds on the conditional means of the potential outcomes. 
This is the case for well-specified generalized linear models, for which asymptotic distributions of the parameters are known, or more generally for random forests methods which quantify pointwise uncertainty using the infinitesimal jackknife for instance \citep{tibshirani2018package}.
Fix $\alpha\in [0,1]$ and let $(x,a) \mapsto \ell_{n}\left(x,a;1-\alpha/2\right)$ and $(x,a)\mapsto u_{n}\left(x,a;1-\alpha/2\right)$ denote available lower- and upper-bound estimators at level $1-\alpha/2$:
 \begin{equation*}
    \forall (x,a)\in \mathcal{X}\times \mathcal{A}\,,\qquad
     P\left(\mu(x,a)\in \left[\ell_{n}\left(x,a;1-\alpha/2\right), u_{n}\left(x,a;1-\alpha/2\right)\right]\right)\geq 1-\alpha\,. 
 \end{equation*} 
We define the \textit{greatest lower-bound treatment policy} $a_{\maxmin}\in\mathcal{A}^{\mathcal{X}}$  as
\begin{equation*}
    x \mapsto a_{\maxmin}(x) \in \argmax\{\ell_{n}\left(x,a;1-\alpha/2\right):a\in \mathcal{A}\}\,,
\end{equation*} and construct the set-valued policy by including all treatments whose upper-bound exceeds the greatest lower-bound benchmark, yielding 
\begin{equation}\label{eq:greatest_lower_bound_set_policy}
    x \mapsto C_{n}^{\alpha}(x)=\{a\in \mathcal{A}: u_{n}\left(x,a;1-\alpha/2\right)\geq \ell_{n}\left(x,a_{\maxmin}(x);1-\alpha/2\right)\}\,. 
\end{equation}
Note that $x\mapsto a_{\maxmin}(x)$ may need to be learned on a different fold than the lower- and upper-bounds to safeguard against overfitting. 

\begin{proposition}
\label{prop:naive}
$P$-almost surely, we have: $P(\Pi^{\star}(X) \cap C_{n}^{\alpha}(X) \neq \emptyset|X=X_{n+1})\geq 1-\alpha$.
\end{proposition}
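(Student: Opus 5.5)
Fix $x\in\mathcal{X}$ and condition on $X_{n+1}=x$. Since $X_{n+1}$ is independent of the training sample that defines $\ell_n,u_n$ and $a_{\maxmin}$, the conditional law of these estimators given $X_{n+1}=x$ is their unconditional law. The probability $P(\cdot\mid X=X_{n+1})$ is therefore the probability over the training data, evaluated at the point $x$. It suffices to show that for every fixed $x$,
\begin{equation*}
P\bigl(\Pi^{\star}(x)\cap C_n^\alpha(x)\neq\emptyset\bigr)\geq 1-\alpha .
\end{equation*}
Since the policy class $\Pi=\mathcal{A}^{\mathcal{X}}$ is unrestricted, the value is maximized pointwise. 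Hence $\Pi^{\star}(x)=\argmax_{a\in\mathcal{A}}\mu(x,a)$ for $P_X$-almost every $x$, which is the source of the ``almost surely'' qualifier. Pick a deterministic $a^\star(x)\in\Pi^\star(x)$, fixed in advance since it depends only on $P$. It is enough to show $P(a^\star(x)\in C_n^\alpha(x))\geq 1-\alpha$.

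The key step is a deterministic implication on a good event. Let $\hat a=a_{\maxmin}(x)$, and define
\begin{equation*}
E_x=\bigl\{\mu(x,a^\star)\leq u_n(x,a^\star;1-\alpha/2)\bigr\}\cap\bigl\{\ell_n(x,\hat a;1-\alpha/2)\leq \mu(x,\hat a)\bigr\}.
\end{equation*}
On $E_x$ we have the chain
\begin{equation*}
u_n(x,a^\star)\geq \mu(x,a^\star)\geq \mu(x,\hat a)\geq \ell_n(x,\hat a),
\end{equation*}
where the middle inequality holds because $a^\star$ maximizes $\mu(x,\cdot)$. By the definition in \eqref{eq:greatest_lower_bound_set_policy}, this means $a^\star\in C_n^\alpha(x)$. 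So the coverage probability is at least $P(E_x)$.

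The remaining step, and the main obstacle, is to lower bound $P(E_x)$ by $1-\alpha$. The guarantee is stated for a fixed pair $(x,a)$, but $\hat a$ is random. I would handle this as the paper suggests: learn $a_{\maxmin}$ on a separate fold, so that conditionally on that fold $\hat a$ is a fixed action. The two-sided guarantee for $(x,\hat a)$ then applies, and likewise for the fixed pair $(x,a^\star)$.

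The bounds are at level $1-\alpha/2$, read as one-sided statements: $P(\mu\leq u_n)\geq 1-\alpha/2$ and $P(\ell_n\leq\mu)\geq 1-\alpha/2$. A union bound over the two one-sided failures gives $P(E_x^c)\leq\alpha/2+\alpha/2=\alpha$.

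If $\hat a=a^\star$, the event only requires $\ell_n\leq u_n$ at the same point. This is implied by the single two-sided interval, whose coverage is at least $1-\alpha$.

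If the bounds are not separately one-sided at level $1-\alpha/2$, I would instead use the two-sided guarantee at each pair. This costs $2\alpha$ under a naive union bound, so I would need to check that the paper's convention of labelling the bounds at level $1-\alpha/2$ is meant to deliver exactly $\alpha/2$ per side. I expect this bookkeeping of levels, together with the independence of $\hat a$, to be the delicate point of the proof.
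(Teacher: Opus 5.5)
Your proof is correct and follows essentially the same route as the paper's: fix $x$ and an optimal $a^\star$, note that when both one-sided bounds hold the chain $\ell_n(x,\hat a)\leq\mu(x,\hat a)\leq\mu(x,a^\star)\leq u_n(x,a^\star)$ forces $a^\star\in C_n^\alpha(x)$, and conclude with a union bound giving $\alpha/2+\alpha/2=\alpha$. The two points you flag as delicate are reading the level-$(1-\alpha/2)$ bounds as one-sided guarantees, and fitting $a_{\maxmin}$ on a separate fold so that $\hat a$ is fixed when the pointwise guarantee is invoked; the paper assumes both implicitly when it asserts that each error event has conditional probability $\alpha/2$, so your write-up is, if anything, more explicit than the original.
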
 
The proof can be found in \Cref{app:proof:naive}. \emph{Conditional} coverage implies \emph{marginal} coverage, which yields \Cref{eq:marginal:cov:A}.
The \emph{greatest lower bound} method generalizes the \emph{bounds policy} approach in learning-to-defer \citep{pmlr-v119-mozannar20b, ghoummaid2024act} by extending it to multiple treatments. This further establishes set-valued policy learning as the natural extension of learning-to-defer beyond binary treatment regimes. Beyond its theoretical alignment, the method offers significant interpretability by identifying optimal treatments with no significant differences in expected potential outcomes. 
The main challenge of this approach lies on the non-trivial construction of valid lower- and upper-bounds for~$\mu$. Furthermore, the approach exhibits high sensitivity to estimation inaccuracies. To address these challenges, the following section introduces a second set-valued policy learning framework based on conformal prediction under a noisy label regime.

\section{Conformal set-valued policy learning}
\label{sec:Conformal policy learning}

\subsection{Noisy ICP for policy learning} 
\label{subsec:noisy:ICP:PL}
Policy learning can be viewed as a classification task and therefore naturally fits into the conformal prediction framework. Consider the following dataset of $n$ i.i.d.\ samples $\{\mathcal{O}_{i}=(X_{i},A_{i}^{*})\}_{i\in [n]}$, where each $A_{i}^{*} \in \Pi^{\star}(X_{i})$ corresponds to a draw from the uniform distribution over the $i$-specific optimal treatments set $\Pi^\star(X_i)$ (conditionally on $X_i$). 
Applying conformal classification algorithms for policy learning is not straightforward in practice due to the unavailability of the optimal treatments as labels. To address this, we introduce an intermediate label estimation step based on 
$\{\mathcal{D}_{i}=(X_{i},A_{i},Y_{i})\}_{i\in [n]}$, framing the problem within a noisy-label regime. The next paragraphs describe the conformal set-valued policy learning algorithm adding insights on each step below. 

\textbf{Conformal policy learning meta-algorithm:} 
Partition  $\{\mathcal{D}_{i}=(X_{i},A_{i},Y_{i})\}_{i\in [n]}$ into three mutually disjoint subsets indexed by $\mathcal{I}_{b}, \mathcal{I}_{\mathrm{train}}$ and $\mathcal{I}_{\mathrm{cal}}$. Let  $n_{\mathrm{cal}}$ denote the cardinality of $\mathcal{I}_{\mathrm{cal}}$.  Our meta-algorithm proceeds as follows.
\begin{enumerate}
    \item \emph{Black-box label generation.}
    Train a label generation algorithm $\mathcal{B}$ using $\mathcal{D}_{b} = \{\mathcal{D}_{i}:i \in \mathcal{I}_{b}\}$. This yields $\mathcal{B}(\mathcal{D}_{b})\in \mathcal{A}^{\mathcal{X}}$, which maps covariates to treatment decisions (i.e.\ label). For $x\in\mathcal{X}$, $\mathcal{B}(\mathcal{D}_b)(x)$ should be understood as a black-box estimate of an element of $\Pi^{*}(x)$.
    
    \item \emph{Nonconformity score.}
    Learn a nonconformity score function $s_n:\mathcal{X}\times \mathcal{A}\to \mathbb{R}$ using the training subset $\{\mathcal{D}_{i}:i \in \mathcal{I}_{\mathrm{train}}\}$. In the next step,  $s_n$ quantifies alignment of any candidate sample  $(x,a)$ relative to calibration data. 
    
    \item \emph{Noisy calibration.} Generate the noisy observations $\{\hat{\mathcal{O}}_{i}= (X_{i},\hat{A}_{i}^{*})\}_{i\in \mathcal{I}_{\mathrm{cal}}}$ by estimating the noisy labels $\hat{A}^{*}_{i}=\mathcal{B}(\mathcal{D}_{b})(X_{i})$ (for all $i\in \mathcal{I}_{\mathrm{cal}}$). Compute the nonconformity scores $\{\hat{S}_{i}=s_{n}(X_{i},\hat{A}^{*}_{i})\}_{i\in \mathcal{I}_{\mathrm{cal}}}$, and derive their $(1-\alpha)$-quantile, $\hat{q}_{1-\alpha}$. Finally, define the set-valued policy as
    \vspace{-5pt}
\begin{equation}\label{eq:noisy_policy_set}
    \textstyle
        x \mapsto  C_{n}^{\alpha}(x)=\{a\in \mathcal{A}: s_n(x,a) < \hat{q}_{1-\alpha}\}\,.
    \end{equation} 
\end{enumerate}

\textbf{1. Black-box algorithm $\mathcal{B}$:}
Any algorithm $\mathcal{B}$ can be used to generate the noisy labels. Examples include, but are not limited to, \textit{(i)} any fixed policy learning algorithm which yields estimates of optimal treatment (OTR), e.g., Q-learning, policy trees, etc.; \textit{(ii)} a foundation model yielding treatment recommendations; \textit{(iii)} any weighted mixture of different OTRs.

\textbf{2. Nonconformity score:} The nonconformity score function plays a central role in quantifying similarity between a candidate sample $(x,a)\in \mathcal{X}\times \mathcal{A}$ and the calibration data $\{(X_{i},\hat{A}_{i}^{*})\}_{i\in \mathcal{I}_{\mathrm{{cal}}}}$, therefore capturing how frequently treatment $a$ is optimal for similar covariates. An optimal treatment $a^{*}$ can be defined as a maximizer of $a \mapsto \mu(x,a )$. Therefore, a well-designed nonconformity score should reflect the distance between the mean outcome under a candidate treatment $a\in \mathcal{A}$, $\mu(x, a)$, and the optimal mean outcome,  $\mu(x, a^{*}) = \max_{a \in \mathcal{A}} \mu(x,a)$. It is crucial to avoid predominantly capturing inter-individual heterogeneity rather than within-individual variation in treatment response. Motivated by the use of \emph{margin score} in conformal classification, we introduce the oracular and empirical margin-based nonconformity score functions, defined as  
\begin{equation}
\label{eq:non:conformity:score:reg}
\textstyle
    s^{*}(x,a) = \max_{a'\neq a}\mu(x,a') - \mu(x,a)\,, \quad   s_{n}(x,a) = \max_{a'\neq a}\mu_{n}(x,a') - \mu_{n}(x,a),
\end{equation}
for $(x,a)\in\mathcal{X}\times\mathcal{A}$,
with $\mu_{n}$ an estimator of $\mu$ based on $\{\mathcal{D}_{i}:i \in \mathcal{I}_{\mathrm{train}}\}$.

\textbf{3. Noisy calibration:} The estimated labels $\hat{A}_{i}^{*}$ are used as proxies for the unobserved ground truth $A_{i}^{*}\in\Pi^{*}(X_{i})$, for all $i\in \mathcal{I}_{\mathrm{cal}}$. For any nonconformity score function  $s$, the two sets $\{\hat{S}_{i}=s(X_{i},\hat{A}^{*}_{i})\}_{i\in \mathcal{I}_{\mathrm{cal}}}$ and $\{S_{i}=s(X_{i},A^{*}_{i})\}_{i\in \mathcal{I}_{\mathrm{cal}}}$ may yield empirical distributions that differ significantly. Consequently, in the absence of assumption on the black-box label generation algorithm $\mathcal{B}$, the resulting set-valued policies may not satisfy marginal coverage for the true optimal treatments (\Cref{eq:marginal:cov:A}).

The next section discusses several approaches to guarantee proper marginal coverage of  conformal set-valued policies.

\subsection{Valid conformal sets}
\label{subsec:valid:conformal:sets}
Previous works \cite{alaa2023confmetalrpredinf, sesia25adaptative} establish marginal coverage in the presence of label noise under stochastic dominance conditions involving the cumulative distribution functions (CDFs) of the nonconformity scores (\Cref{app:noisy_ICP}). For any nonconformity score function $s:\mathcal{X}\times \mathcal{A} \to \mathbb{R}$, for all $i\in \mathcal{I}_{\mathrm{cal}}$, let
\begin{equation*}
    t \mapsto F(t)=P(s(X_i,A^{*}_i)\leq t), \quad t \mapsto \hat{F}(t)=P(s(X_i, \hat{A}^{*}_i)\leq t)
\end{equation*}
be the CDFs of the nonconformity scores evaluated by using the true and noisy labels, respectively.
A sufficient condition for marginal coverage under label noise is that the noisy CDF first-order-stochastically dominates the true one (\cite[Theorem 1]{alaa2023confmetalrpredinf}, \cite[Theorem A1, Section A5.2]{sesia25adaptative}). 
\begin{definition}[Stochastic dominance \cite{alaa2023confmetalrpredinf}]
    Let $G$ and $H$ be two CDFs. $G$ has \textit{first-order stochastic dominance} (FOSD) on $H$, $G\succeq_{(1)} H$, iff $G\leq H$; $G$ has \textit{second-order stochastic dominance} (SOSD) over $H$, $G \succeq_{(2)} H$, iff $\int_{-\infty}^{t}[H(u) - G (u)] du \geq 0$ for all $t \in \mathbb{R}$.
\end{definition} 
FOSD ($\hat F\succeq_{(1)} F$) implies that noisy nonconformity scores are stochastically larger than those computed based on true optimal treatments. Consequently, the labels generated using $\mathcal{B}(\mathcal{D}_{b})$ can be interpreted as \textit{less reliable} than the true optimal treatments, when confidence is quantified through $s$. Note that marginal coverage can be guaranteed 
under weaker conditions, based on SOSD (\Cref{app:noisy_ICP} \citet{alaa2023confmetalrpredinf, sesia25adaptative}). 

Next, we introduce a rank-preserving assumption under which FOSD systematically holds, then propose a perturbation of the black-box algorithm that stochastically increases nonconformity scores, thereby enforcing marginal coverage for fixed confidence levels $\alpha\in [0,1]$.

\textbf{Rank-preserving noisy labels:}
By definition, for each $i\in \mathcal{I}_{\mathrm{cal}}$, 
$\mu(X_{i}, A_{i}^{*}) = \max_{a \in \mathcal{A}} \mu(X_{i},a)$, hence  
$s^{*}(X_{i}, A^{*}_{i}) \leq 0$ and, further, for all $a\in \mathcal{A}$,
\begin{equation}
\label{eq:s0:inequality}
\textstyle
    s^{*}(X_{i}, A^{*}_{i})\leq s^{*}(X_{i},a).
\end{equation} 
This implies FOSD $\hat{F} \succeq_{(1)} F$ 
and straightforwardly ensures marginal coverage (\Cref{eq:marginal:cov:A}) \cite[Theorem 1]{alaa2023confmetalrpredinf}. In practice, we rather use the empirical nonconformity score (\Cref{eq:non:conformity:score:reg}) based on an estimator 
$\mu_{n}$ of $\mu$, which may violate \Cref{eq:s0:inequality} and thus compromise coverage guarantees. In such cases, nonconformity scores can assign lower values to suboptimal treatments than to optimal ones, thereby favoring similarity to incorrect labels.
However, the following assumption, which ensures that empirical nonconformity scores used in practice preserve treatment orders, is sufficient to ensure coverage.
\begin{assumption}[Rank preserving]
\label{ass:equal:ranking}
For all $a,a'\in \mathcal{A}$, $\mu(\cdot,a')\leq \mu(\cdot,a)$ implies $\mu_{n}(\cdot, a')\leq \mu_{n}(\cdot,a)$. 
\end{assumption} 

\begin{proposition}
\label{prop:ranking:ncs}
    Under \Cref{ass:equal:ranking}, 
    $C=C_{n}^{\alpha}$  defined in \Cref{eq:noisy_policy_set} satisfies \Cref{eq:marginal:cov:A}.
\end{proposition}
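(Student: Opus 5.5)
The plan is to reduce the statement to the FOSD sufficient condition of \cite[Theorem 1]{alaa2023confmetalrpredinf}, applied with the empirical score $s=s_n$. We need $\hat F\succeq_{(1)} F$, where
\[
F(t)=P(s_n(X_i,A^{*}_i)\leq t),\qquad \hat F(t)=P(s_n(X_i,\hat A^{*}_i)\leq t).
\]
Throughout we work conditionally on the training fold and on $\mathcal{D}_b$. Given these, $s_n$ and $\mathcal{B}(\mathcal{D}_b)$ are fixed functions, and the calibration points together with the test point are i.i.d. This is exactly the setting in which the cited theorem applies.

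\textbf{Step 1: pointwise ordering.} We show that, for every $x$ and every $a^*\in\Pi^\star(x)$, we have $s_n(x,a^*)\le s_n(x,a)$ for all $a\in\mathcal{A}$. By definition, $\mu(x,a)\le\mu(x,a^*)$ for all $a$. Reading \Cref{ass:equal:ranking} pointwise, this gives $\mu_n(x,a)\le\mu_n(x,a^*)$, so $a^*$ maximizes $\mu_n(x,\cdot)$. Two elementary facts about the margin score then follow. First, for a maximizer $a^*$, $s_n(x,a^*)=\max_{a'\ne a^*}\mu_n(x,a')-\mu_n(x,a^*)\le 0$. Second, for $a\ne a^*$, $s_n(x,a)\ge \mu_n(x,a^*)-\mu_n(x,a)\ge 0$. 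Hence $s_n(x,a^*)\le s_n(x,a)$, which is the empirical analogue of \Cref{eq:s0:inequality}.

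\textbf{Step 2: from pointwise ordering to FOSD.} Apply Step 1 with $x=X_i$, $a^*=A_i^*$ and $a=\hat A^*_i$. This gives almost surely $s_n(X_i,A_i^*)\le s_n(X_i,\hat A_i^*)$. Consequently $\{s_n(X_i,\hat A^*_i)\le t\}\subseteq\{s_n(X_i,A^*_i)\le t\}$, so $\hat F(t)\le F(t)$ for all $t$, which is $\hat F\succeq_{(1)}F$.

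\textbf{Step 3: coverage of the conformal set.} With $\hat q_{1-\alpha}$ the noisy quantile, \cite[Theorem 1]{alaa2023confmetalrpredinf} yields $P(A^*_{n+1}\in C_n^\alpha(X_{n+1}))\ge 1-\alpha$. A direct argument also works. Replace each calibration score by the smaller clean score; this can only lower the empirical quantile, so the clean quantile $q_{1-\alpha}\le\hat q_{1-\alpha}$. Clean split-CP validity \eqref{eq:marginal:cov} then gives coverage with the clean quantile, and enlarging the threshold preserves it. Since $A^*_{n+1}\in\Pi^\star(X_{n+1})$, the event $A^*_{n+1}\in C$ implies $\Pi^\star(X_{n+1})\cap C(X_{n+1})\ne\emptyset$, which is \Cref{eq:marginal:cov:A}. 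Integrating out the conditioning on the training fold and $\mathcal{D}_b$ finishes the argument.

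\textbf{Main obstacles.} The first is interpreting \Cref{ass:equal:ranking}. It must be read pointwise in $x$, as $\mu(x,a')\le\mu(x,a)\Rightarrow\mu_n(x,a')\le\mu_n(x,a)$, for Step 1 to go through. The second is the strict inequality in \eqref{eq:noisy_policy_set}. With $<$ and ties in scores, exact clean-CP validity can fail. I would handle this by invoking the cited theorem in whatever convention it uses, or by assuming the scores are almost surely distinct, with the $\le$ convention.
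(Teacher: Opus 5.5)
Your proof is correct and follows the same route as the paper's: rank preservation makes every $a^*\in\Pi^\star(x)$ a maximizer of $\mu_n(x,\cdot)$, which yields $s_n(x,a^*)\le s_n(x,a)$ pointwise, hence $\hat F\succeq_{(1)} F$, and coverage then follows from \cite[Theorem 1]{alaa2023confmetalrpredinf}. Your explicit $s_n(x,a^*)\le 0\le s_n(x,a)$ argument and your direct quantile-comparison step fill in details the paper only asserts, and your caveat about the strict inequality in \eqref{eq:noisy_policy_set} is well founded: tied optimal treatments get score exactly $0$, so a threshold $\hat q_{1-\alpha}=0$ would return the empty set, a point the paper's proof does not address.
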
 
The proof can be found in \Cref{subsec:proof:ranking}. 
A first condition guaranteeing marginal coverage (\Cref{eq:marginal:cov:A}) is given by \Cref{ass:equal:ranking}. However, this assumption may be difficult both to justify and to verify in practice. We therefore adopt another perspective: rather than imposing assumptions on the estimation of the nonconformity scores, we instead focus directly on the noisy-label component and modify it so as to ensure stochastic dominance and, in turn, marginal coverage.

\textbf{Randomness injection:}
Interpreting FOSD $\hat F\succeq_{(1)} F$ as the predicted label being less  reliable than the optimal treatments, a natural compromise to encourage FOSD is to reduce trust in the black-box label predictions $\hat{A}^{*}_{i}= \mathcal{B}(\mathcal{D}_{b})(X_{i})$. To that end, we introduce exploration through a controlled amount of randomness, reducing the impact of estimation errors on the nonconformity scores.

Let $\{A^{\mathrm{rd}}_{i}\}_{i \in [n]}$ be sampled independently and uniformly from $[K]$, and $t \mapsto F_{\mathrm{rd}}(t)=P(s(X_{i},A^{\mathrm{rd}}_{i})\leq t)$, be the corresponding nonconformity score CDF. Random labels $A^{\mathrm{rd}}_{i}$ are expected to yield poor treatment predictions and therefore large nonconformity scores, as formalized in next assumption.
\begin{assumption}
\label{ass:realistic:random} $F_{\mathrm{rd}}$ has first-order stochastic dominance over $\hat{F}$ and $F$: $F_{\mathrm{rd}} \succeq_{(1)} \hat{F}$ and $F_{\mathrm{rd}} \succeq_{(1)} F$.
\end{assumption}
While using purely random labels $A_i^{\mathrm{rd}}$ instead of noisy labels $\hat A_i^*$ guarantees coverage (\Cref{eq:marginal:cov:A}) under \Cref{ass:realistic:random}, it introduces excessive uncertainty and yields large prediction sets that risk being uninformative. By introducing controlled levels of randomness into the noisy labels, we incorporate calibrated uncertainty that mitigates the impact of estimation errors on the resulting nonconformity scores. This strategically increases set cardinality proportionally to the selected randomness level, enforcing marginal coverage within a given range. 

Fix $r\in [0,1]$ and let $\{R_{i}\}_{i \in [n]}$ be an i.i.d.\ sample from the Bernoulli distribution with success probability $r$. For each $i \in [n]$, define the $i$-specific perturbed label as 
\vspace{-5pt}
\begin{equation}
\label{eq:noisy:label:generation}
\textstyle
    \hat{A}^{*}_{r,i} = R_{i}\cdot A^{\mathrm{rd}}_{i} + (1-R_{i})\cdot \hat{A}^{*}_{i}\,.
\end{equation} 
\begin{remark}
 Perturbing noisy labels through randomness injection is conceptually equivalent to integrating poor performance experts into the noisy label generation process . Empirical investigations suggest that combining several OTRs as explained in \Cref{subsec:noisy:ICP:PL}.3 (iii) can mitigate the need for randomness injection.
\end{remark}

\begin{algorithm}[h]
\caption{Conformal policy learning}
\label{algo:CSP}
\begin{algorithmic}
\Require Data $\{\mathcal{D}_{i}=(X_i,A_i,Y_i)\}_{i\in [n+1]}$, level $\alpha \in [0,1]$, randomness parameter $r \in [0,1]$.
\State Partition data into subsets indexed by  $\mathcal{I}_{b}, \mathcal{I}_{\mathrm{train}}$ and $\mathcal{I}_{\mathrm{cal}}$. Let $n_{\mathrm{cal}}$ be the cardinality of  $\mathcal{I}_{\mathrm{cal}}$.
\State Train black-box label generation algorithm $\mathcal{B}$ on $\mathcal{D}_{b}=\{\mathcal{D}_{i}:i\in \mathcal{I}_{b}\}$.
\State Learn nonconformity score function $s_{n}$ (\Cref{eq:non:conformity:score:reg}) (i.e.\ nuisance $\mu_{n}$) using $\{\mathcal{D}_{i}:i\in \mathcal{I}_{\mathrm{train}}\}$.
\For{$i \in \mathcal{I}_{\mathrm{cal}}$}
    \State $A_i^{\mathrm{rd}} \sim \mathcal{U}([K])$, $\hat{A}^{*}_i = \mathcal{B}(\mathcal{D}_{b})(X_i)$, $R_i \sim \mathrm{Ber}(r)$,
    \State $\hat{S}_i = s_{n}(X_i, \hat{A}^{*}_{r,i})$ where $\hat{A}^{*}_{r,i} = R_i\cdot A_i^{\mathrm{rd}} + (1-R_i) \cdot \hat{A}_i^{*}$ (\Cref{eq:noisy:label:generation}).
\EndFor
\State Define $\hat{q}_{1-\alpha} =\mathrm{Quantile}\left(\frac{\lceil(1-\alpha)(n_{\mathrm{cal}}+1))\rceil}{n_{\mathrm{cal}}}; \{\hat{S}_{i}\}_{i\in \mathcal{I}_{\mathrm{cal}}}\right)$.
\State \Return $C_n^{\alpha}(X_{n+1}) = \{a \in \mathcal{A} : s_n(X_{n+1},a) < \hat{q}_{1-\alpha}\}$.
\end{algorithmic}
\end{algorithm}

Introduce $t \mapsto \hat{F}_{r}(t)=P(s(X_{i},\hat{A}^{*}_{r,i})\leq t)$ for $i\in \mathcal{I}_{\mathrm{cal}}$, the nonconformity score CDF associated with perturbed labels (\Cref{eq:noisy:label:generation}) and $t\mapsto \Delta_{r}(t)=F(t)-\hat{F}_{r}(t)$, the marginal coverage factor \citep{sesia25adaptative}. The next theorem states that if the level of randomness $r$ is sufficiently large to ensure that the perturbed nonconformity scores are larger than those computed under the ground truth, then marginal coverage holds.
Under \Cref{ass:realistic:random}, 
the confidence set $C_{n}^{\alpha}$ obtained from \Cref{algo:CSP} (using the noisy labels $\{\hat{A}^{*}_{r,i}\}_{i \in [n]}$) satisfies the extension of \citep[Theorem A.1]{sesia25adaptative}: 
\begin{equation}
\label{eq:sesia:ext}
    P(\Pi^{\star}(X_{n+1}) \cap C_{n}^{\alpha}(X_{n+1}) \neq \emptyset) \geq 1-\alpha + E[\Delta_{r}(\hat{q}_{1-\alpha})]\,.
\end{equation} 
As a consequence, the following theorem holds.
\begin{theorem}
\label{thm:coverage:r}
Under \Cref{ass:realistic:random}, the confidence set $C_{n}^{\alpha}$ built using  \Cref{algo:CSP} satisfies marginal coverage \Cref{eq:marginal:cov:A} provided that the injected randomness level $r$ satisfies:
\vspace{-5pt}
 \begin{equation*}
     r \geq \bar{r} = E[\hat{F}(\hat{q}_{1-\alpha}) - F(\hat{q}_{1-\alpha})] / E[\hat{F}(\hat{q}_{1-\alpha}) - F_{\mathrm{rd}}(\hat{q}_{1-\alpha})]\,.
 \end{equation*}
\end{theorem}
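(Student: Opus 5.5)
The plan is to start from the coverage bound \eqref{eq:sesia:ext}, which the paper has just established under \Cref{ass:realistic:random}. It suffices to show that $E[\Delta_{r}(\hat{q}_{1-\alpha})]\geq 0$ whenever $r\geq \bar r$. The key structural fact is that the perturbed label $\hat{A}^{*}_{r,i}$ in \eqref{eq:noisy:label:generation} is a mixture. Indeed, $R_i$ is drawn independently of $(X_i,\hat A^*_i,A^{\mathrm{rd}}_i)$, and $A^{\mathrm{rd}}_i$ is independent of everything else. Conditioning on $R_i$ therefore gives, for every $t$ and with the score function held fixed,
\begin{equation*}
\hat F_r(t) = r\,F_{\mathrm{rd}}(t) + (1-r)\,\hat F(t).
\end{equation*}
Hence $\Delta_r(t) = F(t)-\hat F(t) + r\bigl(\hat F(t)-F_{\mathrm{rd}}(t)\bigr)$, which is affine in $r$.

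Next, I evaluate at $t=\hat q_{1-\alpha}$ and take expectations over the randomness of the quantile. Here $F,\hat F,F_{\mathrm{rd}}$ are understood conditionally on the training and black-box folds, so that they are deterministic functions when plugged into the random quantile. This gives
\begin{equation*}
E[\Delta_r(\hat q_{1-\alpha})] = -E[\hat F(\hat q_{1-\alpha})-F(\hat q_{1-\alpha})] + r\,E[\hat F(\hat q_{1-\alpha})-F_{\mathrm{rd}}(\hat q_{1-\alpha})].
\end{equation*}
By \Cref{ass:realistic:random}, $F_{\mathrm{rd}}\leq \hat F$ pointwise, so the coefficient $D:=E[\hat F(\hat q_{1-\alpha})-F_{\mathrm{rd}}(\hat q_{1-\alpha})]$ is nonnegative.

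I then split into cases.
\begin{itemize}
\item If $D>0$, then $r\geq \bar r$ is exactly equivalent to $rD\geq E[\hat F(\hat q_{1-\alpha})-F(\hat q_{1-\alpha})]$, which means $E[\Delta_r(\hat q_{1-\alpha})]\geq 0$. Plugging this into \eqref{eq:sesia:ext} yields \eqref{eq:marginal:cov:A}.
\item If $D=0$, then $\bar r$ is only meaningful when the numerator is nonpositive. I would use the convention that the condition is vacuous or satisfied in that case. Alternatively, I would note that $F_{\mathrm{rd}}\leq F$ (the other half of \Cref{ass:realistic:random}) gives $\hat F - F\leq \hat F - F_{\mathrm{rd}}$, so the numerator is at most $D=0$. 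That makes $E[\Delta_r]\geq 0$ for every $r$.
\end{itemize}
The same inequality $F_{\mathrm{rd}}\leq F$ also shows $\bar r\leq 1$, so an admissible $r$ always exists. This point is worth remarking in the proof.

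The main obstacle is the dependence between $\hat q_{1-\alpha}$ and the CDFs. The quantile is computed from the perturbed calibration scores, so $\hat F_r$, and hence $\bar r$, must be read as involving the law of an independent test-type draw evaluated at the calibration-dependent quantile. The mixture identity must hold for the population CDFs rather than the empirical ones. I would handle this by conditioning on the folds $\mathcal I_b,\mathcal I_{\mathrm{train}}$, under which $s_n$ and $\mathcal B(\mathcal D_b)$ are fixed and $F,\hat F,F_{\mathrm{rd}},\hat F_r$ are deterministic. The identity then holds pointwise in $t$, and linearity of expectation passes through the random argument $\hat q_{1-\alpha}$ without difficulty. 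Taking a further expectation over the folds completes the argument.
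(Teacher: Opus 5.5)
Your proposal is correct and follows essentially the same route as the paper's proof. Both arguments use the mixture identity $\hat F_r = rF_{\mathrm{rd}} + (1-r)\hat F$, write $E[\Delta_r(\hat q_{1-\alpha})]$ as an affine function of $r$, solve for the threshold using $F_{\mathrm{rd}}\le \hat F$ from \Cref{ass:realistic:random}, and plug the result into \eqref{eq:sesia:ext}. Your extra points (conditioning on the fitting folds so the CDFs are deterministic, handling the degenerate denominator $D=0$, and showing $\bar r\le 1$ via $F_{\mathrm{rd}}\le F$) fill in details the paper leaves implicit.
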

The proof can be found in \Cref{app:proof:thm}.
Unfortunately, the expression of $\bar{r}$ does not have an analytic form and cannot be easily estimated. The following section details the heuristics to ensure our learning algorithm provides both actionable and trustworthy policies.

\section{Numerical experiments}
\label{sec:numerical:experiments}
We first validate the two proposed approaches, GLB (\Cref{subsec:greatest-lower-bound}) and conformal set-valued policy learning with varying randomness levels $r\in [0,1]$ (\Cref{sec:Conformal policy learning}), using oracular data. We further demonstrate the practical utility of the approach using an observational dataset of in-vitro fertilization (IVF) treatment cycles. The next section summarizes key findings, while \Cref{app:results} provides additional experimental details and results. We provide a dedicated package and code required to reproduce all results at \href{https://github.com/laufuentes/setValuedPolicyLearning}{\texttt{setValuedPolicyLearning}}. 

\textbf{Evaluation:} We evaluate policy performance using mean set cardinality and uniform set-policy value (\Cref{subsec:set-policy:value}), estimated via targeted learning \citep{MR2306500,MR2867111, montoya2023optimal} and averaged over 50 samples. In synthetic settings, we use potential outcomes to directly assess marginal coverage, the coverage factor, and oracular set-policy values.

\subsection{Synthetic setting}
\label{subsec:synthetic data}
\textbf{Data simulation:} We simulate observational data with Gaussian covariates $X\in \mathbb{R}^{4}$ where two dimensions are uninformative, and $K=5$ treatment levels. Potential outcomes are defined through a deterministic model driven by the first two covariate components. 
Covariate space is divided in two: in the first half, potential outcomes under treatments $\{1,2\}$ yield larger outcome values, while in the other half the optimal treatments are  $\{3,4\}$, thereby inducing two distinct optimal treatment regimes (see \Cref{fig:synthetic:data}). Refer to \Cref{app:synthetic:data} for a detailed description of the synthetic data-generating process.
We generate data using three sample sizes $n\in \{6{,}000; 12{,}000;18{,}000\}$, and construct set-valued policies for confidence levels $\alpha \in \{0,0.05,\ldots, 1\}$  across varying randomness levels $r\in \{0,0.1,\ldots, 1\}$. GLB and conformal set-valued policy learning methods are compared to the conformal procedure based on true optimal labels (Oracular CP), sampled uniformly from $\Pi^{*}(\cdot)$.
We estimate the upper- and lower-bounds for GLB using \texttt{regression\_forest} \cite{athey2019generalized,tibshirani2018package}. For the conformal procedure, we estimate the nonconformity score using super learner ensemble method \cite{van2007super,SuperLearner} and generate labels using double robust Q-learning \cite{nordland2022policy} with a generalized linear model  \citep{nelder1972generalized}. 
\begin{wraptable}{r}{0.55\textwidth}
\setlength{\tabcolsep}{4pt}
\begin{tabular}{lcccccc}
  \hline
 &  OCP & $r$=0 & $r$=0.2 & $r$=0.5  & GLB \\ 
  \hline
$\Pi^{\star}(X_{i})$ = \{1, 2\} & 0.89 & 0.83 & 0.91 & 0.96 & 0.99 \\ 
$\Pi^{\star}(X_{i})$ = \{3, 4\} & 0.90 & 0.87 & 0.91 & 0.95  & 0.89 \\ 
   \hline 
Coverage & 0.90 & 0.85 & 0.91 & 0.96  & 0.96 \\ 
   \hline 
$E[|C_{n}^{\alpha}(X_{i})|]$ & 2.97 & 2.57 & 3.21 & 3.96  & 3.70 \\ 
   \hline 
SPV & 6.99 & 7.17 & 6.84 & 5.96 & 4.16 \\ 
   \hline
\end{tabular}
\caption{Comparison of conditional coverage (on $\Pi^{\star}(X_i)$), overall coverage, mean cardinality and set-policy value across different set-valued policy learning methods ($\alpha=0.1$ and $n=6{,}000$).
\textit{SPV}: Set-Policy Value, \textit{OCP}: Oracular CP, \textit{GLB}: Greatest Lower Bound.}
\label{tab:results:normal}
\vspace{-5mm}
\end{wraptable}
For the conformal procedure, we estimate the nonconformity score using super learner ensemble method \cite{van2007super,SuperLearner} and generate labels using double robust Q-learning \cite{nordland2022policy} with a generalized linear model  \citep{nelder1972generalized}.
Refer to \cref{app:estimation:details} for further estimation details.

\textbf{Results and main take-aways:} For fixed $\alpha>0$, increasing the randomness level $r$ increases prediction set cardinality (\Cref{fig:card}), theoretically shifting expected coverage to $1-\alpha + E[\Delta_{r}(\hat{q}_{1-\alpha})]$ (\Cref{eq:sesia:ext}). Higher $r$ mitigate the risk of under-coverage (\Cref{thm:coverage:r}), but increases the risk of uninformativeness. As presented in \Cref{tab:results:normal}, marginal coverage is only achieved with randomness injection. Setting $r=0.2$ is sufficient to ensure marginal coverage though it yields a mean cardinality slightly above the oracular's. Increasing $r$ further boosts the mean cardinality beyond the oracular's, revealing a tendency towards over-recommendation to ensure coverage (\Cref{fig:marg:cov:factor}).  GLB attains marginal coverage, with over-coverage, likely due to sensitivity to estimation error. In such cases, achieving informative predictions and therefore higher set-policy values, requires a higher $\alpha$, (a lower coverage target) as shown in \Cref{fig:spv}.

\subsection{In-vitro fertilization application}
\label{subsec:ivf}
The IVF dataset consists of $n=18{,}538$ recorded ovarian stimulation cycles from multiple clinical centers. Observations comprise baseline characteristics, ordinal gonadotropin dosages ($\mathcal{A}=\{1,\ldots, 6\}$), and two outcomes: follicular yield ($Y$) to be maximized, and estradiol levels ($\xi$) where higher values, often driven by higher gonadotropin doses, increase the risk of ovarian hyper stimulation syndrome (OHSS). The aim is to support clinical decision makers in hormone dose selection, leaving room for clinical judgment. 
We test different confidence levels $\alpha\in \{0,0.05, \ldots, 0.5\}$ and randomness levels $r\in \{0,0.1,\ldots, 0.6\}$ and evaluate predicted policies using the uniform set-policy value (\Cref{subsec:set-policy:value}) and the set-policy value of a minimal dose strategy ($\delta_{\mathrm{lower}}$), aligning with dose de-escalation guidelines, which control estradiol levels and minimize OHSS risk.

\textbf{Results and main take-aways:} Delegating the final decision ($\delta$) to the decision-maker facilitates the accommodation of secondary objectives. While uniform sampling ($\delta_{\mathrm{unif}}$) maximizes yield on $Y$, the $\delta_{\mathrm{lower}}$ strategy facilitates control over estradiol levels $\xi$ with only a marginal reduction in $Y$ (\Cref{subfig:Y_Xi,fig:SPV:Y,fig:SPV:xi}). For any given $\delta$, increasing $r$ results in larger mean cardinalities (\Cref{fig:cardinality}). While this enhances the probability of coverage and facilitates safer outcomes, it may simultaneously increase decision-making burden. Set-valued policies align with clinical knowledge by filtering extreme doses, thereby preserving the treatment's inherent ordinal structure (\Cref{subfig:treatments:set}).

\begin{figure}[h]
    \centering
    \begin{minipage}{0.4\textwidth}
        \includegraphics[width=\linewidth]{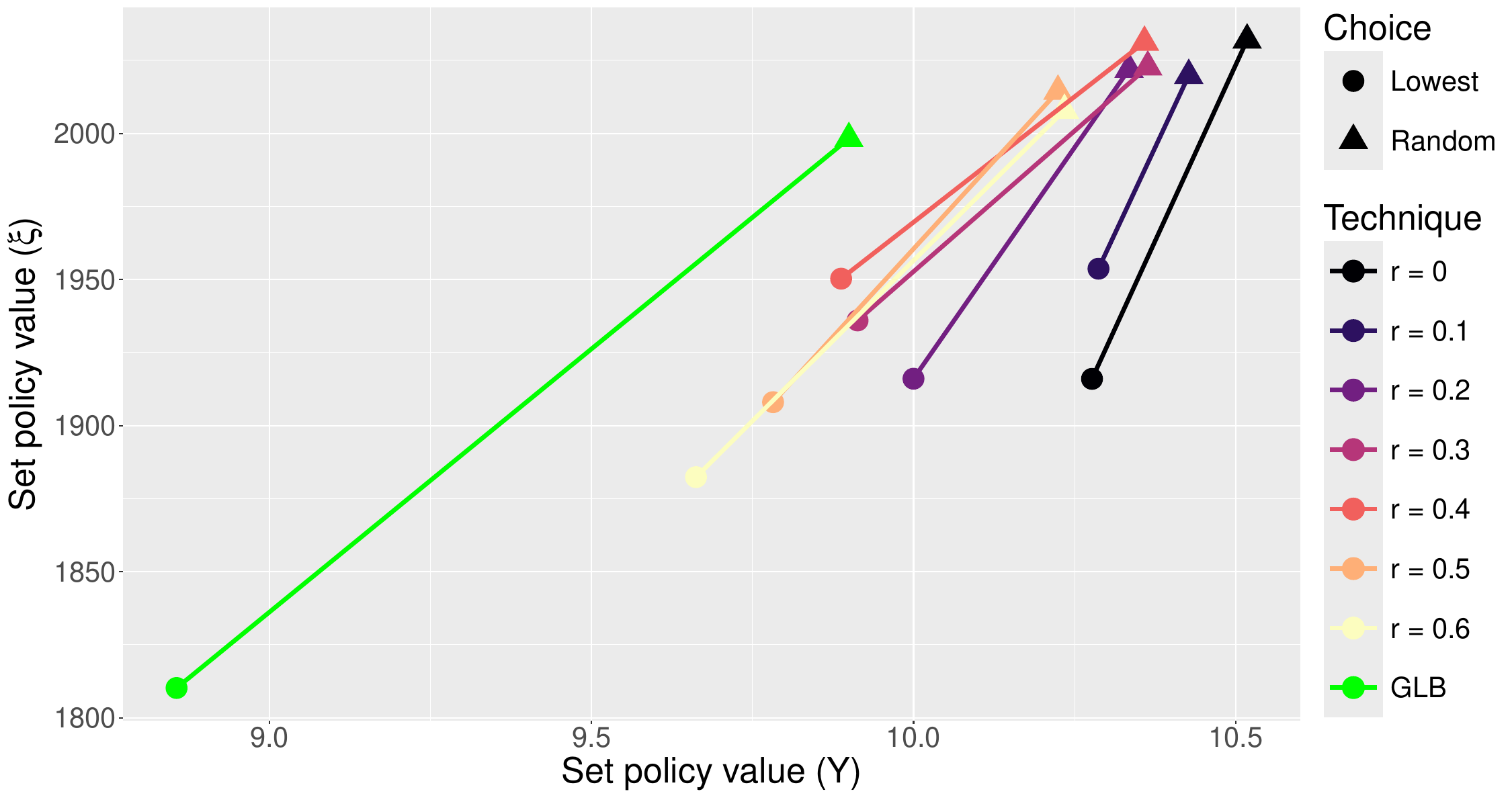}
        \subcaption{Set-policy values comparison ($Y$ vs. $\xi$).}
        \label{subfig:Y_Xi}
    \end{minipage}
    \begin{minipage}{0.58\textwidth}
    \includegraphics[width=\linewidth, page=3]{images/ivf/Heatmap_conformal-set-valued-policy-learning_Elixir.pdf}
    \subcaption{Treatments in conformal set-valued policies.}
    \label{subfig:treatments:set}
    \end{minipage}
    \caption{(a) Set-policy values for $Y$ (x-axis) and $\xi$ (y-axis) across two decision strategies: $\delta_{\mathrm{lower}}$ (points) and $\delta_{\mathrm{unif}}$ (triangles), $\alpha=0.1$ (b) Rows: individual observations, columns: treatment levels included in conformal set-valued policies, for $\alpha=0.1$.}
\end{figure}

\section{Conclusion}
\label{sec:conclusion}

Reliable individualized decision-making requires methods that remain trustworthy in the presence of statistical uncertainty, modeling error and context-specific constraints. The goal of personalized recommendations should be to support rather than take-over policy makers, as advocated by our framework and methods.
The greatest lower bound relies on the non-trivial construction of valid lower and upper-bound estimators of the conditional mean outcome, with performance sensitive to estimation errors. \citet{luedtke2024one} propose a regularized one-step estimator and confidence sets for complex parameters such as counterfactual density estimation,  providing foundations to address such estimation challenges.
Conformal policy learning, while robust, faces a common but fundamental limitation in noisy label literature: the unavailability of labels (optimal treatments). This leaves the selection of an ideal randomness level (\Cref{thm:coverage:r}) as an open question. To address individual vulnerabilities, we recommend a veridical data science approach \cite{yu2024veridical} combining our approaches with varying randomness levels and different learners for each estimation step. This hybrid strategy leverages the complementary strengths of our methods to produce more trustworthy decisions. Numerical experiments highlight the relevance of involving the decision-maker, which allows for the incorporation of clinical judgment and yields context aware choices. 

\newpage
\bibliography{bibliography}
\bibliographystyle{plainnat}

\newpage
\appendix
\section{Technical proofs}
\label{app:sec:proofs}

\subsection{Proof of \Cref{prop:lower:bound:SPV}}
\label{app:proof:lower:bound:SPV}
Let $\delta_{*}$ denote the decision-maker's choice strategy. 
\begin{assumption}[Decision maker outperforms random]
\label{ass:decision:maker:strategy}
    For all $x\in \mathcal{X}$ and for all $C\in \mathcal{P}(\mathcal{A})^{\mathcal{X}}$, 
    \begin{equation*}
        E[\mu(x,\delta_{\mathrm{unif}}(C(x),x))] = \frac{1}{|C(x)|}\sum_{a\in C(x)}\mu(x,a) \leq E[\mu(x,\delta_{*}(C(x),x))]\,.  
    \end{equation*}
\end{assumption}
\begin{proof}
Let $X\in \mathcal{X}$ and $C\in \mathcal{P}(\mathcal{A})^{\mathcal{X}}$ be fixed. Under \Cref{ass:decision:maker:strategy}, the decision-maker's choice $\delta_{*}$ outperforms the random strategy. Taking the expectation with respect to $P_{X}$ yields the desired inequality:
\begin{align*}
     & \bar{\mathcal{V}}_P(C) =E_{X\sim P_{X}}\left[\frac{1}{|C(X)|}\sum_{a\in C(X)}\mu(X,a)\right] \leq E_{X\sim P_{X}}\left[ \mu(X,\delta_{*}(C(X),X))\right]=\mathcal{V}_{P}(C,\delta_{*}).
\end{align*}
\end{proof}

\subsection{Proof of \Cref{prop:naive}}
\label{app:proof:naive}
\begin{proof} Fix $x\in \mathcal{X}$ and let $a^{*}\in \Pi^{\star}(x)$. First note that the following events are equivalent,
\begin{equation}
\label{eq:remark:naive}
        P(a^{*} \in C^{\alpha}_{n}(x)|X=x) = P\left(u_{n}\left(x,a^{*};1-\alpha/2\right)\geq \ell_{n}\left(x,a_{\maxmin}(x);1-\alpha/2\right)|X=x\right).
\end{equation}
Define the following error events 
\begin{align*}
    E_{1} &= \{\ell_{n}\left(x,a_{\maxmin}(x);1-\alpha/2\right) > \mu(x,a_{\maxmin}(x))\}\\
    E_{2} &= \{u_{n}\left(x,a^{*};1-\alpha/2\right) < \mu(x,a^{*})\}.
\end{align*} 
By construction, each event has conditional probability $\alpha/2$ almost surely (a.s.). Moreover, by definition of an optimal treatment $a^{*}\in \Pi^{\star}(x)$, the inequality $\mu(x,a)\leq \mu(x,a^{*})$ holds for all $a\in \mathcal{A}$. In particular, $\mu(x,a_{\maxmin}(x))\leq \mu(x,a^{*})$. If neither of the error events occurs, we obtain
\begin{align*}
   (E_{1}\cup E_{2})^{C}&=\{\ell_{n}(x,a_{\maxmin}(x);1-\alpha/2) \leq \mu(x,a_{\maxmin}(x)) \leq \mu(x,a^{*}) \leq  u_{n}(x,a^{*};1-\alpha/2)\}\\
    &= \{\ell_{n}(x,a_{\maxmin}(x);1-\alpha/2) \leq  u_{n}(x,a^{*};1-\alpha/2)\},
\end{align*} which by \Cref{eq:remark:naive} implies that $a^{*}\in C_{n}^{\alpha}(x)$. Consequently, $\{a^{*}\notin C_{n}^{\alpha}(x)\}\subseteq E_{1}\cup E_{2}$. Applying Bonferroni's inequality yields
\begin{equation*}
    P(E_{1}\cup E_{2}|X=x) \leq P(E_{1}|X=x) +  P(E_{2}|X=x) = \alpha,
\end{equation*} and therefore,
\begin{equation*}
    P(a^{*}\in C_{n}^{\alpha}(x)|X=x) = 1- P(E_{1}\cup E_{2}|X=x) \geq 1- \alpha,
\end{equation*}
which establishes the desired marginal coverage property: 
\begin{equation*}
    P(\Pi^{\star}(x)\cap C_{n}^{\alpha}(x)\neq \emptyset|X=x) \geq 1- \alpha,
\end{equation*}
\end{proof}
\subsection{Proof of \Cref{prop:ranking:ncs}}
\label{subsec:proof:ranking}
\begin{proof}
    Let $a \mapsto \mu_n(\cdot,a)$ be an estimator of $a \mapsto \mu(\cdot,a)$ satisfying \Cref{ass:equal:ranking} and introduce $s_{n}$, the empirical nonconformity based on $\mu_{n}$ (\Cref{eq:non:conformity:score:reg}). For every fixed $x\in \mathcal{X}$, and every $a^{*}\in \Pi^{\star}(x)$, the inequality
    \begin{equation*}
        \mu_{n}(x,a^{*})\geq \mu_{n}(x,a),
    \end{equation*} holds for all $a\in \mathcal{A}$ under the ranking assumption. It follows, that, 
    \begin{equation*}
    s_{n}(x,a^{*}) - s_{n}(x,a) = \max_{a'\neq a^{*}}\mu_{n}(x,a') - \mu_{n}(x,a^{*}) -(\max_{a'\neq a}\mu_{n}(x,a') - \mu_{n}(x,a))\leq 0\\
    \end{equation*}
    Then, for all $t$, $P(s_{n}(x,a^{*})\leq t) \geq P(s_{n}(x,a)\leq t)$, giving FOSD for all $t$
    and therefore marginal coverage (\Cref{eq:marginal:cov:A}), for any $\alpha>0$ \citep[Theorem 1]{alaa2023confmetalrpredinf}.  
\end{proof}

\subsection{Proof of \Cref{thm:coverage:r}}
\label{app:proof:thm}
\begin{proof}
    Consider the noisy calibration set $\{\hat{\mathcal{O}}_{i}=(X_{i}, \hat{A}^{*}_{r,i})\}_{i\in \mathcal{I}_{\mathrm{cal}}}$ where labels $\hat{A}^{*}_{r,i}$ are defined as in \Cref{eq:noisy:label:generation} by combining the purely random labels $A_{i}^{\mathrm{rd}}$ and estimated labels $\hat{A}^{*}_{i} = \mathcal{B}(\mathcal{D}_{b})(X_{i})$ through the random level $r\in [0,1]$. The CDF of these perturbed labels, is defined as 
\begin{equation}
\label{eq:CDF:pertubed}
    \hat{F}_{r}(t) = rF_{\mathrm{rd}}(t) + (1-r)\hat{F}(t),
\end{equation} 
 for all $t$. Following \Cref{eq:sesia:ext}, a confidence set $C_{n}^{\alpha}$ constructed via \Cref{algo:CSP} satisfies marginal coverage (\Cref{eq:marginal:cov:A}),
\begin{align*}
    P(\Pi^{\star}(X_{n+1}) \cap C_{n}^{\alpha}(X_{n+1})\neq \emptyset)
    &\geq 1-\alpha + E[\Delta_{r}(\hat{q}_{1-\alpha})] \\
    &\geq 1-\alpha, 
\end{align*}
if and only if $E[\Delta_{r}(\hat{q}_{1-\alpha})] \geq 0$ for the second inequality. Note that in the above, the expectation is taken with respect to the randomness of $\hat{q}_{1-\alpha}$.

To achieve positivity at level $\hat{q}_{1-\alpha}$, the randomness level $r$ must be calibrated so that the true CDF, $F$, dominates the perturbed CDF $\hat{F}_{r}$ at $\hat{q}_{1-\alpha}$, on average. Expanding the definition from \Cref{eq:CDF:pertubed}:
\begin{align*}
    E[\Delta_{r}(\hat{q}_{1-\alpha})] &=
    E[F(\hat{q}_{1-\alpha})) - \hat{F}_{r}(\hat{q}_{1-\alpha}))]
    \\
    &= E[F(\hat{q}_{1-\alpha})) - \hat{F}(\hat{q}_{1-\alpha}))] - rE[F_{\mathrm{rd}}(\hat{q}_{1-\alpha})) - \hat{F}(\hat{q}_{1-\alpha})].
\end{align*}
Under \Cref{ass:realistic:random}, $E[F(\hat{q}_{1-\alpha})) - \hat{F}_{r}(\hat{q}_{1-\alpha}))] \geq 0$ is satisfied as long as:
\begin{equation*}
    r \geq \frac{E[\hat{F}(\hat{q}_{1-\alpha})) - F(\hat{q}_{1-\alpha}))]}{E[\hat{F}(\hat{q}_{1-\alpha})) - F_{\mathrm{rd}}(\hat{q}_{1-\alpha}))]}. 
\end{equation*}

\end{proof}
\section{Supplementary materials}
\subsection{Noisy ICP}
\label{app:noisy_ICP}
In the noisy label regime \citep{einbinderlabelnoise}, the available samples $\{\hat{O}_{i}=(X_{i},\hat{T}_{i})\}_{i\in [n+1]}$, drawn from a distribution $P_{X}P_{\hat{T}|X}$, contain corrupted labels $\hat{T}_{i}\in \mathcal{T}$ that serve as proxies for the true labels $T_{i}\in \mathcal{T}$. The presence of noise fundamentally alters the ICP procedure as it yields a modified nonconformity score distribution $\{\hat{S}_{i}=s(X_{i},\hat{T}_{i})\}_{i\in\mathcal{I}_{\mathrm{cal}}}$. Consequently, the empirical quantile $\hat{q}_{1-\alpha} = \mathrm{Quantile}(\frac{\lceil(1-\alpha)(1+n_{\mathrm{cal}})\rceil}{n_{\mathrm{cal}}};\{\hat{S}_{i}\}_{i\in \mathcal{I}_{\mathrm{cal}}})$ differs from \Cref{eq:CP:clean:quantile}, leading to a altered prediction set 
\begin{equation}\label{eq:ICP_set}
    C_{n}^{\alpha}(X_{n+1})=\{t\in \mathcal{T}: s(X_{n+1},t) < \hat{q}_{1-\alpha}\},
\end{equation} ensuring marginal coverage for noisy labels under exchangeability assumptions (\Cref{eq:marginal:cov}). However, this property does not extend to the true labels, 
\begin{equation}
\label{eq:marginal:cov:objective}
    P(T_{n+1}\in C_{n}^{\alpha}(X_{n+1})) \geq 1-\alpha, 
\end{equation} without additional assumptions. To characterize when coverage guarantees transfer from noisy to true labels, let $F(t)=P(s(X_{i},T_{i})\leq t)$ and $\hat{F}(t)= P(s(X_{i},\hat{T}_{i})\leq t)$ denote the cumulative distribution functions (CDFs) of the nonconformity scores evaluated at the true and noisy labels. \citet[Theorem 1]{alaa2023confmetalrpredinf} shows that desired coverage \Cref{eq:marginal:cov:objective} holds for confidence levels $\alpha\in [0,\alpha^{*}]$ provided that at least one of the following stochastic ordering conditions hold,
\begin{equation*}
(i) \ \hat{F} \succeq_{(1)} F, \quad (ii) \ F \succeq_{(2)} \hat{F}, \quad \text{or} \quad (iii) \ \hat{F} \succeq_{\mathrm{mcx}} F.
\end{equation*} where $\succeq_{(1)}$ denotes first-order stochastic dominance (FOSD), $\succeq_{(2)}$ denotes second-order stochastic dominance, and $\succeq_{\mathrm{mcx}}$ the monotone convex order (\citet[Section A.1]{alaa2023confmetalrpredinf}). The parameter $\alpha^{*}$ denotes the largest level $t$, such that the inequality $F(t)\geq \hat{F}(t)$ holds,   thereby defining the region $[0, \alpha^{*}]$, which is difficult to characterize in practice. Note that $(i)$ implies $\alpha^{*}=1$. 

In \citet[Theorem A1]{sesia25adaptative}, the authors impose a weaker condition requiring the mapping $t \mapsto 
\Delta(t) =F(t)-\hat{F}(t)$ to be nonnegative at $t=\hat{q}_{1-\alpha}$ to guarantee \Cref{eq:marginal:cov:objective}. For a confidence set $C_{n}^{\alpha}$ built using noisy labels, they prove that, 
\begin{equation*}
    P(T_{n+1}\in C_{n}^{\alpha}(X_{n+1}))\geq 1-\alpha +E[\Delta(\hat{q}_{1-\alpha})].
\end{equation*}
A sufficient condition is to ensure $\alpha \in [0, \alpha^{*}]$. 


\subsection{Details on numerical experiments \Cref{sec:numerical:experiments}}
\label{app:results}

\subsubsection{Additional details on synthetic setting \Cref{subsec:synthetic data}}
\label{app:synthetic:data}
\textbf{Synthetic data generation:} We generate synthetic observational data with linear decision boundary characterizing two optimal treatment regions. We sample features $X\in \mathbb{R}^{4}$ from a multivariate gaussian distribution $X \sim \mathcal{N}(0,\mathcal{I}_{4})$ with two uninformative dimensions. We consider five available treatments ($A\in \{1,\ldots,5\}$) and generate potential outcomes $Y(a)|X=x$,  according to $\mu(x,a) + 0.5\epsilon$, with i.i.d.\ noise $\epsilon \sim \mathcal{N}(0,1)$. The conditional mean, 
\begin{equation}
\label{eq:conditional:mean}
\mu(x,a) = b(x) + 
    \begin{cases}
        5e^{x_{1}}\cdot \bone\{x_{1}+ x_{2}<0.5\} &\text{ if } a\in \{1,2\}\\
        5(x_{2}+1)^{2}\cdot\bone\{x_{1}+ x_{2}\geq 0.5\} &\text{ if } a\in \{3,4\}. 
    \end{cases}
\end{equation} 
decomposes into a baseline effect  $b(x) = 2x_{1} - e^{x_{2}}$ and a treatment interaction term. A linear hyperplane, $x_{1}+x_{2}=0.5$, partitions the potential outcomes (see \Cref{fig:synthetic:data}). Observations below the plane yield higher values for $a\in \{1,2\}$, whereas those above result in higher values for $a\in \{3,4\}$. Treatment $5$ is consistently suboptimal, characterized by a null interaction term. 
 
Treatment assignments ($A$) are sampled from the distribution $\pi_{b}(X)\in [0,1]^{5}$, designed to reflect sub-optimal clinical decision-making. We define two preference states, $\beta_{\mathrm{low}}=[10, 10, 1, 3, 2]^\top$ and $\beta_{\mathrm{high}}=[2,1,10,10,4]^{\top}$, which prioritize treatments $\{1,2\}$ and $\{3,4\}$, respectively. The mixture of these states is modulated by a sigmoid function $w(x)\mapsto \frac{1}{1+ e^{-(x_{1}+x_{2}-0.5)}}$ which simulates a linear gradient across the decision hyperplane as,
\begin{equation}
\label{eq:behavioral:policy}
    \pi_{b}(x)=w(x)\beta_{\mathrm{low}}+ (1-w(x))\beta_{\mathrm{high}}.
\end{equation}
\begin{figure}[H]
    \centering
    \includegraphics[width=0.65\linewidth]{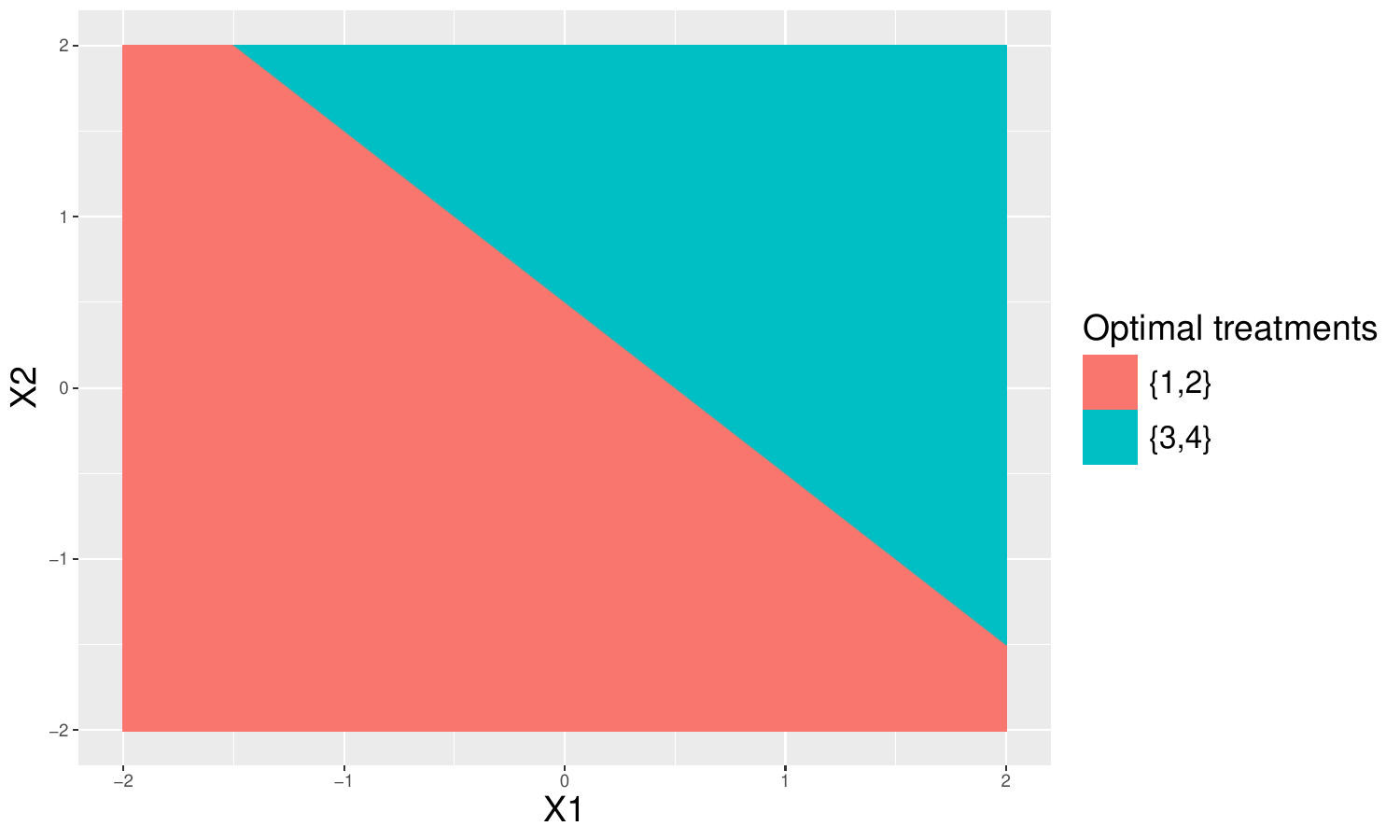}
    \caption{Distribution of Optimal Treatment Assignments by Feature Values}
    \label{fig:synthetic:data}
\end{figure}

\begin{definition}[Coverage]
\label{eq:cov:def}
Let $\{(X_{i},T_{i})\}_{i\in [n]}$ be set of $n$ i.i.d\ observations where $T_{i}\subset \mathcal{T}$ denotes the set of ground-truth labels associated to $X_{i}$. For a prediction set $C:\mathcal{X}\to \mathcal{P}(\mathcal{T})$ we define the coverage as the average proportion of true labels contained within the prediction sets,
\begin{equation}
\frac{1}{n}\sum_{i=1}^{n}\frac{|T_{i}\cap C(X_{i})|}{|T_{i}|}
\end{equation}
\end{definition}

\subsubsection{Estimation details}
\label{app:estimation:details}
\begin{enumerate}
    \item  \textbf{Greatest lower bound}: We utilized the \texttt{regression\_forest} function \citep{athey2019generalized, tibshirani2018package} to estimate $a\mapsto \mu(\cdot, a) $. The corresponding upper- and lower-bounds were then derived using variance estimates provided by the infinitesimal jackknife.
    \item \textbf{Conformal policy learning}: the procedure involves estimation at steps 1 and 2 in \Cref{subsec:noisy:ICP:PL} 
    \begin{enumerate}
        \item \textbf{Black-box label generation}: we generated the noisy labels $\hat{A}_{i}^{*}=\mathcal{B}(\mathcal{D}_{b})(X_{i})$ (i.e.\ OTR estimators) using the double-robust Q-learning implementation via \texttt{polle} package \cite{nordland2022policy}. We employed a linear model \citep{nelder1972generalized} for the Q-model ($\mu$) while the g-model ($\pi_{b}$) was specified using Generalized Random Forest (\texttt{grf}) framework \citep{athey2019generalized, tibshirani2018package}. For the IVF application, noisy labels were generated using multi-arm causal forest (\texttt{MACF}).
        \item \textbf{Nonconformity score}: The conditional mean $\mu_{n}$ used to define the empirical nonconformity score $s_{n}$ in \Cref{eq:non:conformity:score:reg} was estimated via the super learner ensemble method \citep{van2007super, SuperLearner}.  The \texttt{SuperLearner}'s library included mean imputation, generalized linear models (GLMs) \citep{nelder1972generalized},  random forests (ranger) \citep{wright2017ranger}, gradient boosted trees \citep{chen2016xgboost} and Kernlab support vector machine (ksvm) \citep{karatzoglou2004kernlab}.  
    \end{enumerate}
\end{enumerate}

\subsubsection{Additional figures for synthetic setting (\Cref{subsec:synthetic data})} 
\begin{figure}[h]
    \centering
    \includegraphics[width=\textwidth]{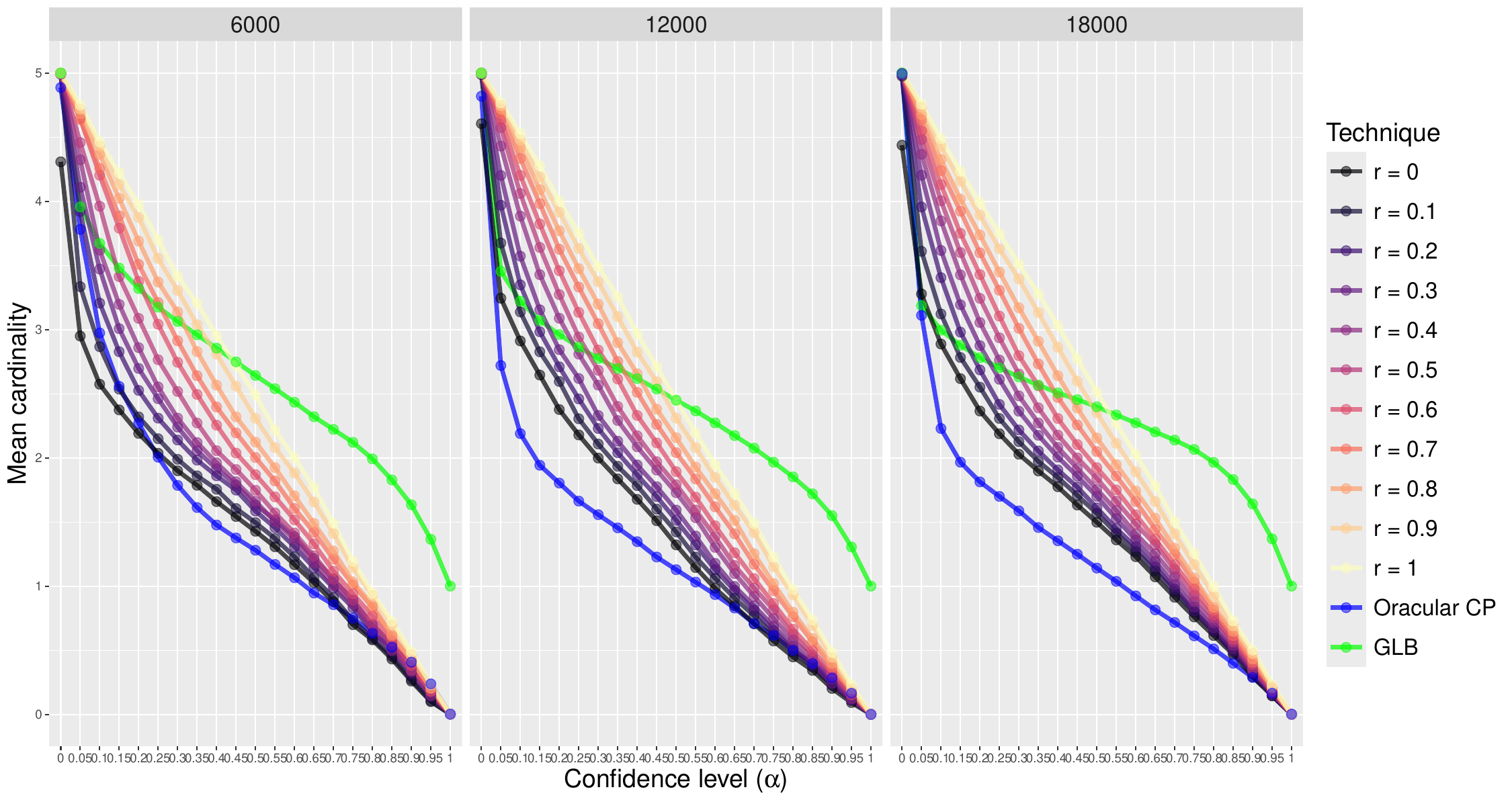}
    \caption{
    Mean cardinality for varying levels $\alpha$. Results compare conformal set-valued policy learning across different randomness levels $r$, GLB (green) and Oracular conformal prediction (blue). Columns indicate sample size of training data.}
    \label{fig:card}
\end{figure}

\paragraph{Remarks from \Cref{fig:card}:} Both GLB and higher randomness levels $r$ yield higher mean cardinalities, suggesting a tendency towards over-recommendation to ensure coverage. For $n=6{,}000$ and $\alpha<0.25$, low randomness levels $(r\in \{0, 0.1\})$ result in mean cardinalities below the oracular's, suggesting coverage violations. Similar phenomena occurs for $n\in \{6{,}000; 12{,}000\}$, when $\alpha>0.65$ and $r<0.5$.

\begin{figure}[h]
\includegraphics[width=\textwidth]{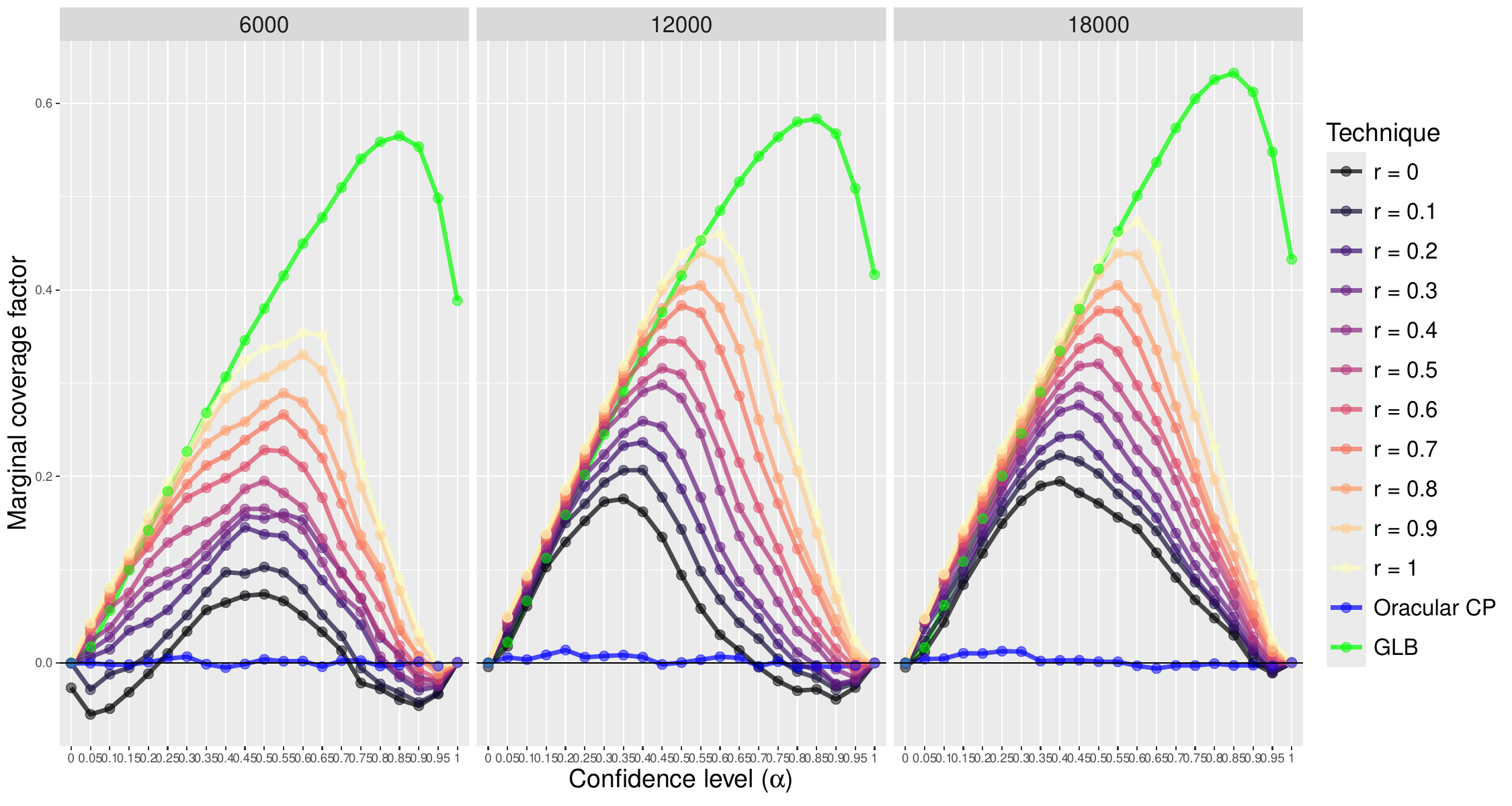}
\caption{Marginal coverage gap $t \mapsto E[\Delta(t)]$ for varying levels $\alpha$. Results compare conformal set-valued policy learning across different randomness levels $(r)$, GLB (green) and Oracular conformal prediction (blue). Columns indicate sample size of training data.}
\label{fig:marg:cov:factor}
\end{figure}

\paragraph{Remarks from \Cref{fig:marg:cov:factor}:} The marginal coverage factor $t \mapsto E[\Delta(t)]$ represents the difference between attained coverage $(1-\alpha + E[\Delta(\hat{q}_{1-\alpha})])$ (\Cref{eq:sesia:ext}) and target level $(1-\alpha)$. Marginal coverage \Cref{eq:marginal:cov:A} is therefore guaranteed when $E[\Delta(\hat{q}_{1-\alpha})]\geq 0$. We observe that higher randomness levels and GLB consistently yield non-negative values, resulting in over-coverage and therefore introducing a substantial gap between target and attained coverage. Conversely, for $n\in \{6{,}000; 12{,}000\}$ lower randomness levels $(r<0.3)$ can produce negative values for some confidence levels, thereby violating coverage requirements.  

\begin{figure}[h]
    \includegraphics[width=\textwidth]{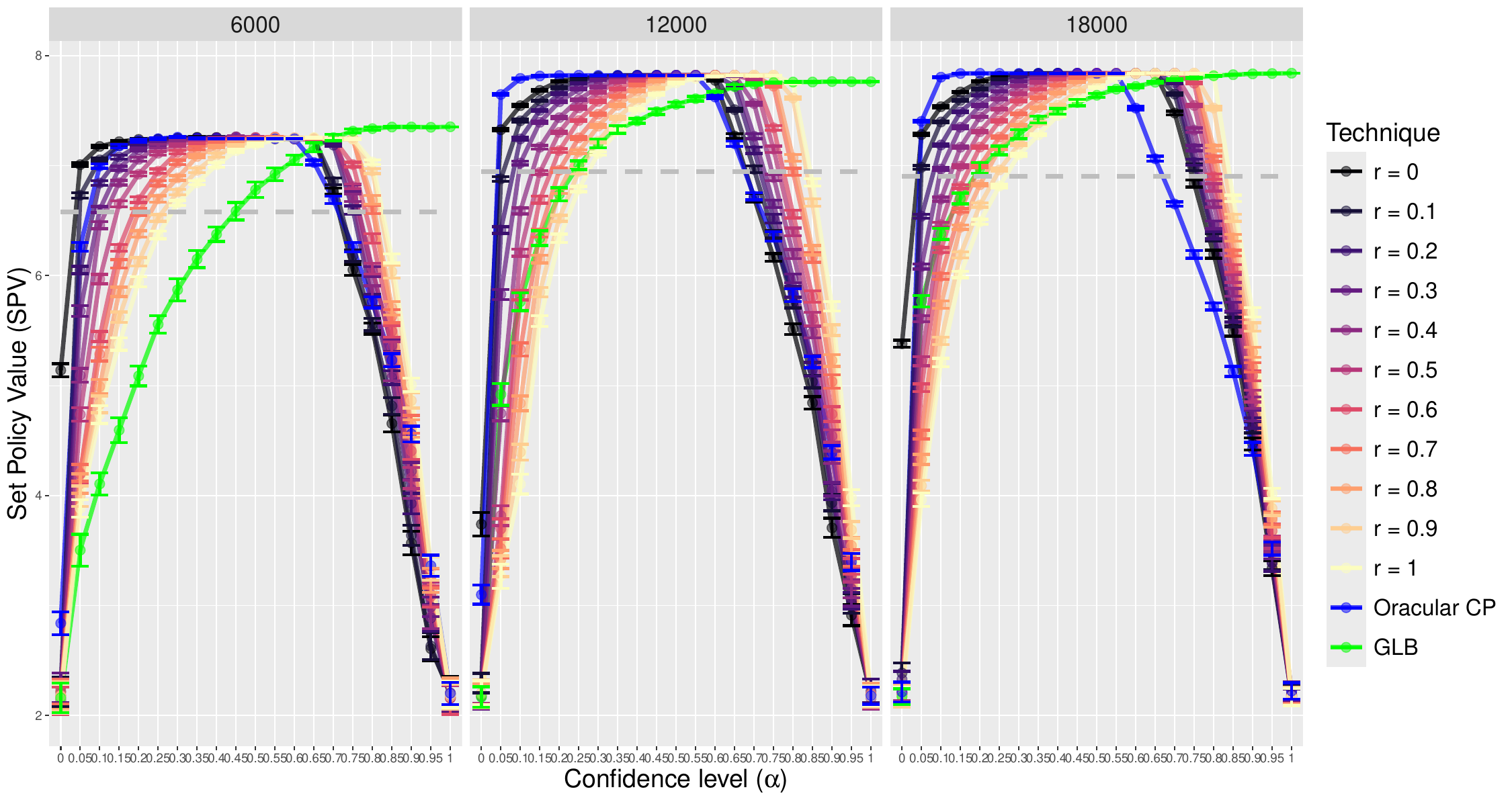}
    \caption{Uniform set-policy value for varying levels $\alpha$. Results are shown for conformal set-valued policy learning across different randomness levels $r$, GLB (green) and Oracular conformal prediction (blue). The gray dashed line represents the policy value achieved by the noisy label generation technique alone.}
    \label{fig:spv}
\end{figure}

\paragraph{Remarks from \Cref{fig:spv}:} The uniform set-policy value (SPV) serves as a lower bound on the set-policy value under the decision-maker's choice, reflecting task difficulty. While low confidence levels ($\alpha$) lead to overly large cardinalities, high confidence levels often result in empty sets, both extreme lead to a random selection among treatment options reflecting a high level of task difficulty. Achieving a higher SPV requires balancing these filtering effects to ensure that sets contain only the best performing options. We remark that as randomness levels increases, higher confidence levels are required to attain a high SPV. A similar requirement is observed for GLB. Crucially a high SPV to not translate to coverage guarantees. For instance, while low randomness levels achieve the highest SPV for low confidence levels, marginal coverage is not guaranteed (see \Cref{fig:marg:cov:factor}).

\newpage
\subsubsection{Additional figures for IVF application (\Cref{subsec:ivf})}

\begin{figure}[h]
    \centering
    \includegraphics[width=\linewidth]{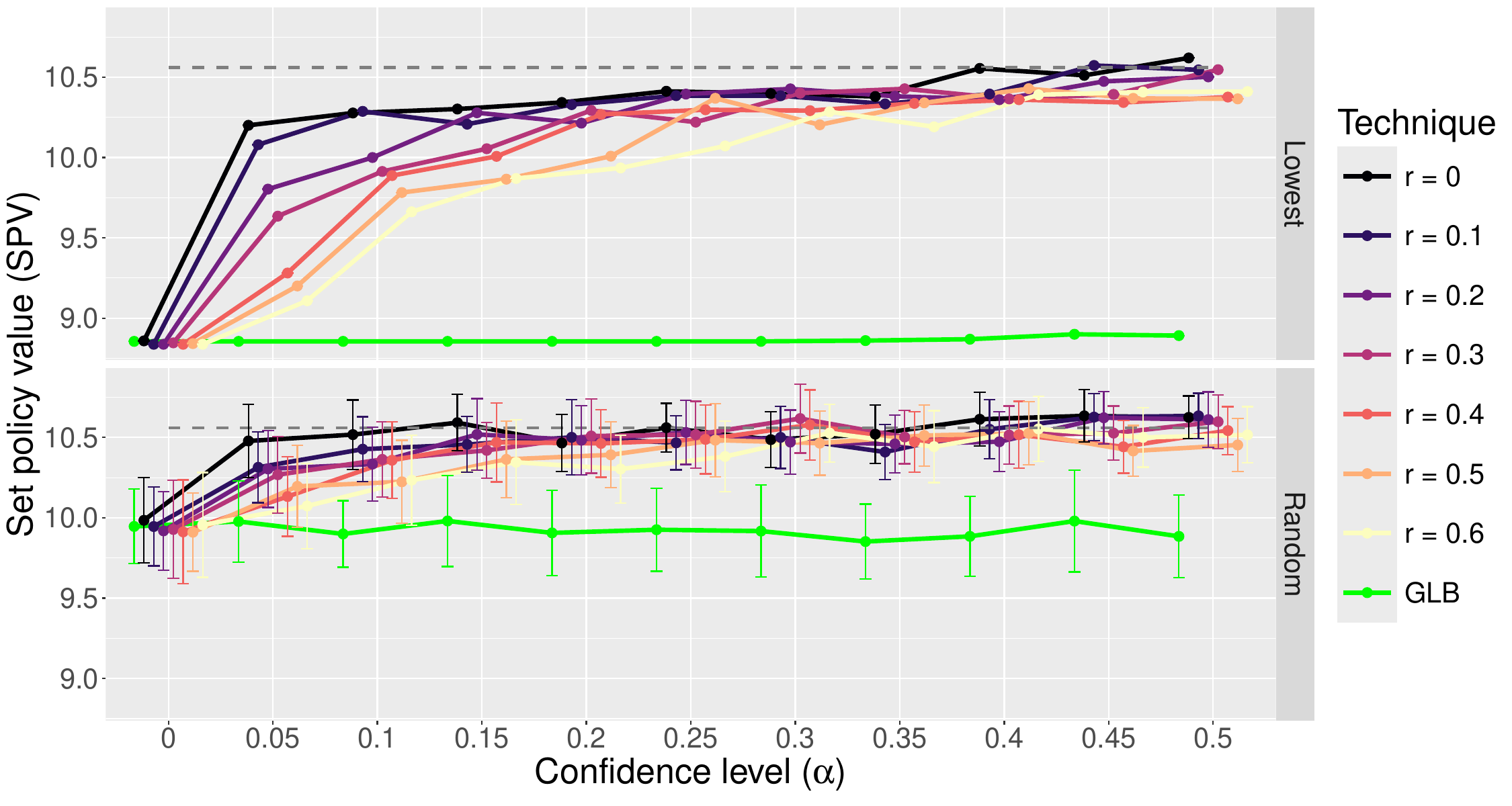}
    \caption{Set-policy value for follicular yield $(Y)$ across varying varying levels $\alpha$, for two decision strategies: $\delta_{\mathrm{unif}}$ (bottom) and $\delta_{\mathrm{lower}}$ (top). Results are shown for conformal set-valued policy learning across different randomness levels $(r)$ and GLB (green). The gray dashed line represents the policy value  (for $Y$) achieved by the noisy label generation technique (MACF) alone.
    No error bars for the choice strategy $\delta_{\mathrm{lower}}$ since it is deterministic (upper plot).}
    \label{fig:SPV:Y}
\end{figure}

\paragraph{Remarks from \Cref{fig:SPV:Y}:} The label generation technique yields the highest SPV. By increasing the confidence level, we filter the output to retain only the optimal treatments thereby reaching the highest SPV benchmark. The minimal dose strategy $\delta_{\mathrm{lower}}$ while effective controlling estradiol levels, requires more stringent filtering to attain that same benchmark. As observed in the simulation study (see \Cref{fig:spv}) both GLB and large randomness levels require higher confidence levels to attain the highest SPV.

\begin{figure}[h]
    \centering
    \includegraphics[width=\linewidth]{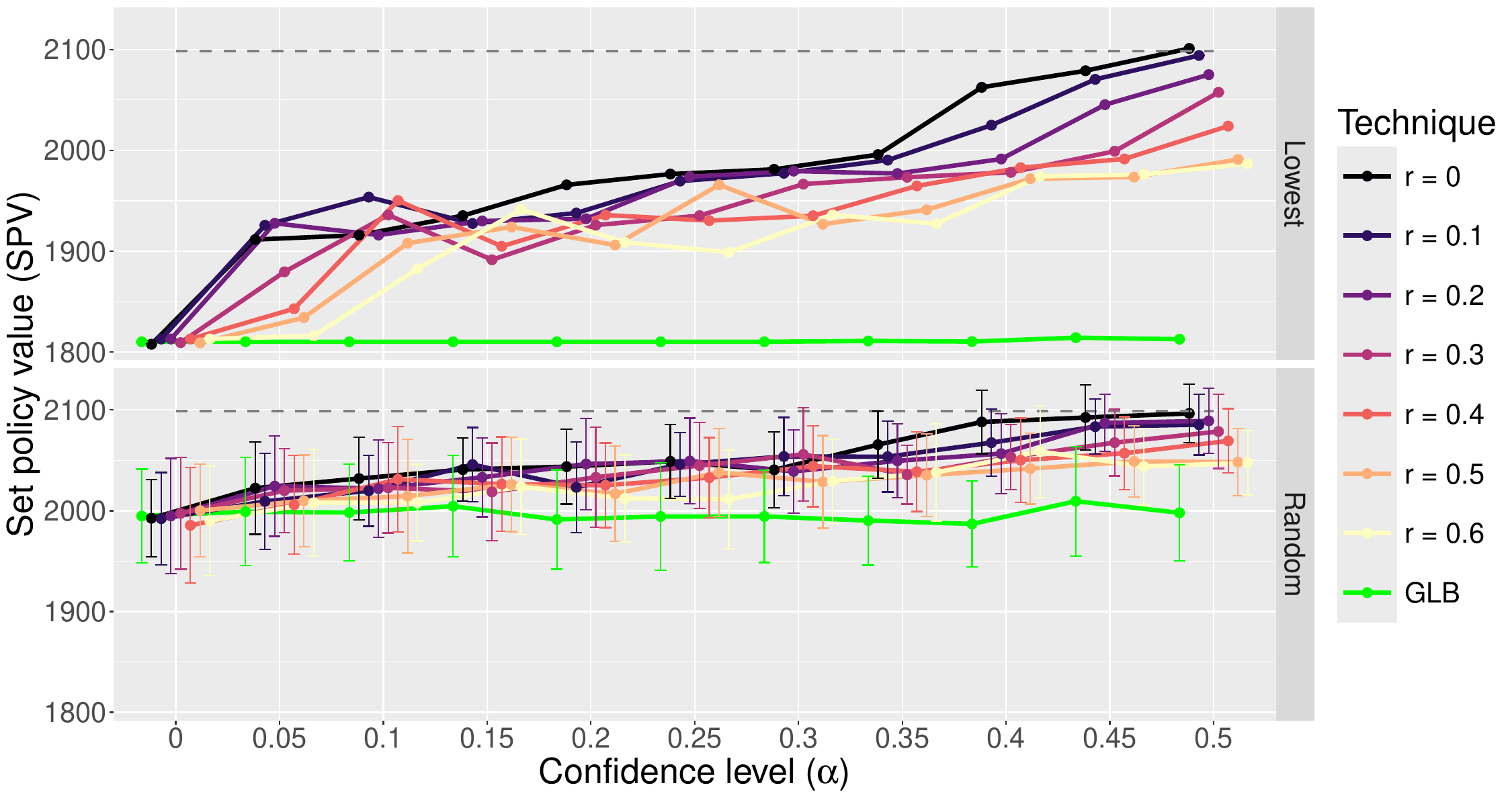}
    \caption{Set-policy value for estradiol $(\xi)$ across varying varying levels $\alpha$, for two decision strategies: $\delta_{\mathrm{unif}}$ (bottom) and $\delta_{\mathrm{lower}}$ (top). Results are shown for conformal set-valued policy learning across different randomness levels $(r)$ and GLB (green). The gray dashed line represents the policy value (for $\xi$) achieved by the noisy label generation technique (MACF).
    No error bars for the choice strategy $\delta_{\mathrm{lower}}$ since it is deterministic (upper plot).}
    \label{fig:SPV:xi}
\end{figure}

\paragraph{Remarks from \Cref{fig:SPV:xi}:}  The label generation technique yields the highest SPV for estradiol levels, which may increase the risk of OHSS. For both decision strategies, higher confidence levels filters for optimal treatments (i.e.\ maximizing $Y$) thereby increasing estradiol levels. Larger randomness levels require higher confidence levels to attain such benchmark. The minimal dose strategy ($\delta_{\mathrm{lower}}$) provides stable regulation of estradiol levels. 

\begin{figure}[h]
    \centering
    \includegraphics[width=\linewidth]{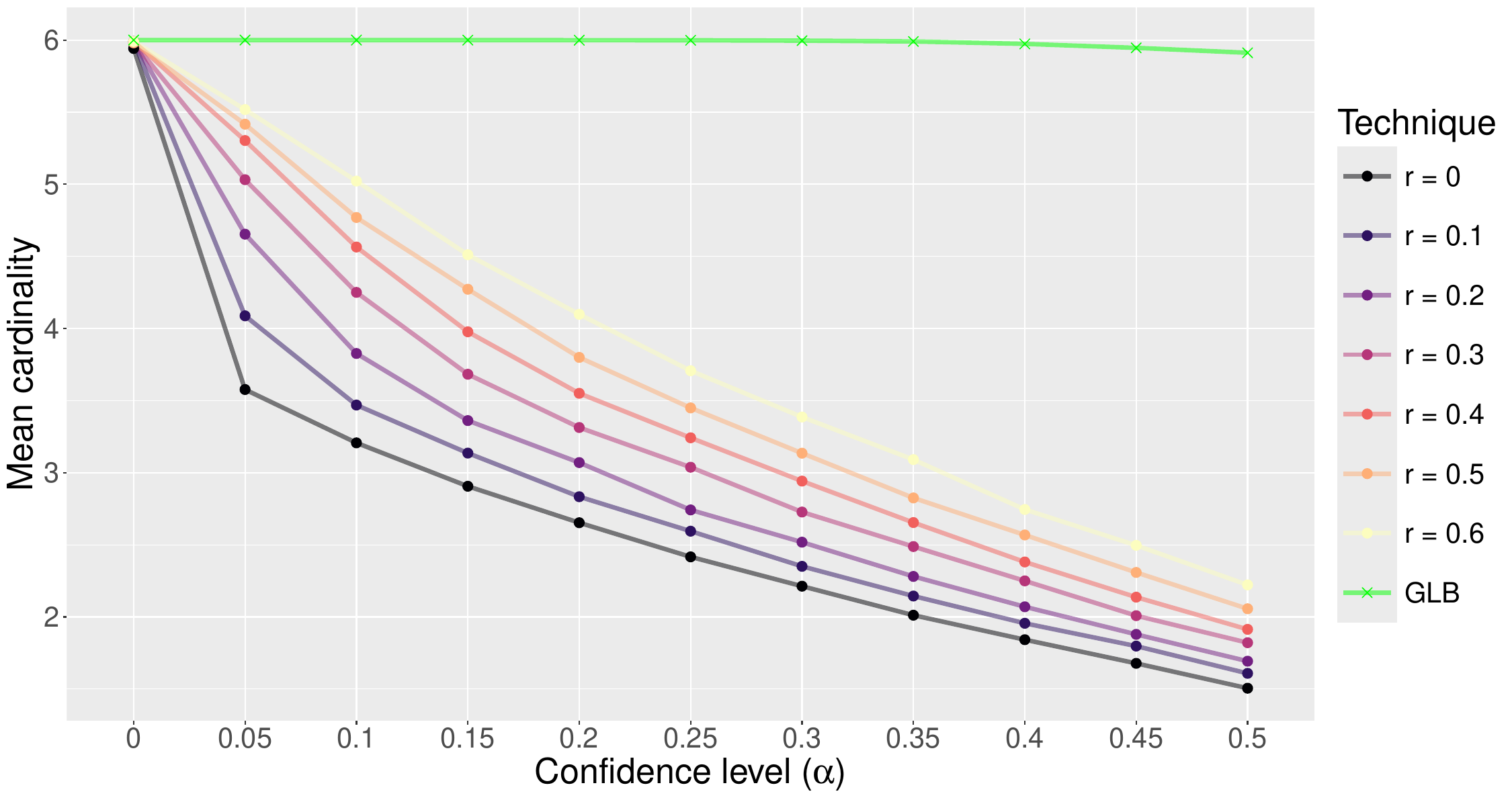}
    \caption{Mean cardinality for varying levels $\alpha$. Results are shown for conformal set-valued policy learning across different randomness levels $r$ and GLB (green).}
    \label{fig:cardinality}
\end{figure}

\paragraph{Remarks from \Cref{fig:cardinality}:} Consistent with the simulation study (see \Cref{fig:card}), high randomness levels produce larger mean cardinalities. Similarly, GLB generates consistently larger sets and frequently recommends the entire treatment space.

\subsubsection{Computational resources}
\label{app:computational:resources}
The synthetic setting computations were performed on a 10-core architecture. The synthetic numerical experiments required a total CPU time of 2051.71 seconds with total elapsed time of 660.49 seconds.

\newpage
\input{checklist.tex}

\end{document}

%% file: checklist.tex
\section*{NeurIPS Paper Checklist}

\begin{enumerate}

\item {\bf Claims}
    \item[] Question: Do the main claims made in the abstract and introduction accurately reflect the paper's contributions and scope?
    \item[] Answer: \answerYes{}.
    \item[] Justification: Contributions and scope are presented in \Cref{sec:introduction}.
    \item[] Guidelines:
    \begin{itemize}
        \item The answer \answerNA{} means that the abstract and introduction do not include the claims made in the paper.
        \item The abstract and/or introduction should clearly state the claims made, including the contributions made in the paper and important assumptions and limitations. A \answerNo{} or \answerNA{} answer to this question will not be perceived well by the reviewers. 
        \item The claims made should match theoretical and experimental results, and reflect how much the results can be expected to generalize to other settings. 
        \item It is fine to include aspirational goals as motivation as long as it is clear that these goals are not attained by the paper. 
    \end{itemize}

\item {\bf Limitations}
    \item[] Question: Does the paper discuss the limitations of the work performed by the authors?
    \item[] Answer: \answerYes{}.
    \item[] Justification: Limitations are discussed in \Cref{sec:conclusion}.
    \item[] Guidelines:
    \begin{itemize}
        \item The answer \answerNA{} means that the paper has no limitation while the answer \answerNo{} means that the paper has limitations, but those are not discussed in the paper. 
        \item The authors are encouraged to create a separate ``Limitations'' section in their paper.
        \item The paper should point out any strong assumptions and how robust the results are to violations of these assumptions (e.g., independence assumptions, noiseless settings, model well-specification, asymptotic approximations only holding locally). The authors should reflect on how these assumptions might be violated in practice and what the implications would be.
        \item The authors should reflect on the scope of the claims made, e.g., if the approach was only tested on a few datasets or with a few runs. In general, empirical results often depend on implicit assumptions, which should be articulated.
        \item The authors should reflect on the factors that influence the performance of the approach. For example, a facial recognition algorithm may perform poorly when image resolution is low or images are taken in low lighting. Or a speech-to-text system might not be used reliably to provide closed captions for online lectures because it fails to handle technical jargon.
        \item The authors should discuss the computational efficiency of the proposed algorithms and how they scale with dataset size.
        \item If applicable, the authors should discuss possible limitations of their approach to address problems of privacy and fairness.
        \item While the authors might fear that complete honesty about limitations might be used by reviewers as grounds for rejection, a worse outcome might be that reviewers discover limitations that aren't acknowledged in the paper. The authors should use their best judgment and recognize that individual actions in favor of transparency play an important role in developing norms that preserve the integrity of the community. Reviewers will be specifically instructed to not penalize honesty concerning limitations.
    \end{itemize}

\item {\bf Theory assumptions and proofs}
    \item[] Question: For each theoretical result, does the paper provide the full set of assumptions and a complete (and correct) proof?
    \item[] Answer:\answerYes{}.
    \item[] Justification: Assumptions, propositions and theorems are clearly stated in the main paper (see \Cref{sec:background}, \Cref{subsec:greatest-lower-bound}, \Cref{subsec:valid:conformal:sets}), technical proofs are cross-referenced and located in the Appendix \Cref{app:sec:proofs}.
    \item[] Guidelines:
    \begin{itemize}
        \item The answer \answerNA{} means that the paper does not include theoretical results. 
        \item All the theorems, formulas, and proofs in the paper should be numbered and cross-referenced.
        \item All assumptions should be clearly stated or referenced in the statement of any theorems.
        \item The proofs can either appear in the main paper or the supplemental material, but if they appear in the supplemental material, the authors are encouraged to provide a short proof sketch to provide intuition. 
        \item Inversely, any informal proof provided in the core of the paper should be complemented by formal proofs provided in appendix or supplemental material.
        \item Theorems and Lemmas that the proof relies upon should be properly referenced. 
    \end{itemize}

    \item {\bf Experimental result reproducibility}
    \item[] Question: Does the paper fully disclose all the information needed to reproduce the main experimental results of the paper to the extent that it affects the main claims and/or conclusions of the paper (regardless of whether the code and data are provided or not)?
    \item[] Answer: \answerYes{}.
    \item[] Justification: \Cref{app:results} provides full experimental details and we additional provide an R package and documented code scripts to reproduce the synthetic setting figures at \href{https://anonymous.4open.science/r/setValuedPolicyLearning-6B9C/}{\texttt{anonymized-setValuedPolicyLearning}}. 
    \item[] Guidelines:
    \begin{itemize}
        \item The answer \answerNA{} means that the paper does not include experiments.
        \item If the paper includes experiments, a \answerNo{} answer to this question will not be perceived well by the reviewers: Making the paper reproducible is important, regardless of whether the code and data are provided or not.
        \item If the contribution is a dataset and\slash or model, the authors should describe the steps taken to make their results reproducible or verifiable. 
        \item Depending on the contribution, reproducibility can be accomplished in various ways. For example, if the contribution is a novel architecture, describing the architecture fully might suffice, or if the contribution is a specific model and empirical evaluation, it may be necessary to either make it possible for others to replicate the model with the same dataset, or provide access to the model. In general. releasing code and data is often one good way to accomplish this, but reproducibility can also be provided via detailed instructions for how to replicate the results, access to a hosted model (e.g., in the case of a large language model), releasing of a model checkpoint, or other means that are appropriate to the research performed.
        \item While NeurIPS does not require releasing code, the conference does require all submissions to provide some reasonable avenue for reproducibility, which may depend on the nature of the contribution. For example
        \begin{enumerate}
            \item If the contribution is primarily a new algorithm, the paper should make it clear how to reproduce that algorithm.
            \item If the contribution is primarily a new model architecture, the paper should describe the architecture clearly and fully.
            \item If the contribution is a new model (e.g., a large language model), then there should either be a way to access this model for reproducing the results or a way to reproduce the model (e.g., with an open-source dataset or instructions for how to construct the dataset).
            \item We recognize that reproducibility may be tricky in some cases, in which case authors are welcome to describe the particular way they provide for reproducibility. In the case of closed-source models, it may be that access to the model is limited in some way (e.g., to registered users), but it should be possible for other researchers to have some path to reproducing or verifying the results.
        \end{enumerate}
    \end{itemize}

\item {\bf Open access to data and code}
    \item[] Question: Does the paper provide open access to the data and code, with sufficient instructions to faithfully reproduce the main experimental results, as described in supplemental material?
    \item[] Answer: \answerNo{}.
    \item[] Justification: While the code and instructions required to reproduce the synthetic results are available at \href{https://anonymous.4open.science/r/setValuedPolicyLearning-6B9C/}{\texttt{anonymized-setValuedPolicyLearning}}, the in-vitro fertilization data (\Cref{subsec:ivf}) is not publicly available due to privacy regulatory constraints. 
    \item[] Guidelines:
    \begin{itemize}
        \item The answer \answerNA{} means that paper does not include experiments requiring code.
        \item Please see the NeurIPS code and data submission guidelines (\url{https://neurips.cc/public/guides/CodeSubmissionPolicy}) for more details.
        \item While we encourage the release of code and data, we understand that this might not be possible, so \answerNo{} is an acceptable answer. Papers cannot be rejected simply for not including code, unless this is central to the contribution (e.g., for a new open-source benchmark).
        \item The instructions should contain the exact command and environment needed to run to reproduce the results. See the NeurIPS code and data submission guidelines (\url{https://neurips.cc/public/guides/CodeSubmissionPolicy}) for more details.
        \item The authors should provide instructions on data access and preparation, including how to access the raw data, preprocessed data, intermediate data, and generated data, etc.
        \item The authors should provide scripts to reproduce all experimental results for the new proposed method and baselines. If only a subset of experiments are reproducible, they should state which ones are omitted from the script and why.
        \item At submission time, to preserve anonymity, the authors should release anonymized versions (if applicable).
        \item Providing as much information as possible in supplemental material (appended to the paper) is recommended, but including URLs to data and code is permitted.
    \end{itemize}

\item {\bf Experimental setting/details}
    \item[] Question: Does the paper specify all the training and test details (e.g., data splits, hyperparameters, how they were chosen, type of optimizer) necessary to understand the results?
    \item[] Answer:\answerYes{}.
    \item[] Justification:  Experimental details are described in \Cref{app:results}, and a full implementation is available at \href{https://anonymous.4open.science/r/setValuedPolicyLearning-6B9C/}{\texttt{anonymized-setValuedPolicyLearning}}. The repository includes and R package along with script that enable the reproduction of the exact data frames and results. 
    \item[] Guidelines:
    \begin{itemize}
        \item The answer \answerNA{} means that the paper does not include experiments.
        \item The experimental setting should be presented in the core of the paper to a level of detail that is necessary to appreciate the results and make sense of them.
        \item The full details can be provided either with the code, in appendix, or as supplemental material.
    \end{itemize}

\item {\bf Experiment statistical significance}
    \item[] Question: Does the paper report error bars suitably and correctly defined or other appropriate information about the statistical significance of the experiments?
    \item[] Answer: \answerYes.
    \item[] Justification: Error bars capture the uncertainty of the set-valued policies in \Cref{fig:spv,fig:SPV:Y,fig:SPV:xi}. 
    \item[] Guidelines:
    \begin{itemize}
        \item The answer \answerNA{} means that the paper does not include experiments.
        \item The authors should answer \answerYes{} if the results are accompanied by error bars, confidence intervals, or statistical significance tests, at least for the experiments that support the main claims of the paper.
        \item The factors of variability that the error bars are capturing should be clearly stated (for example, train/test split, initialization, random drawing of some parameter, or overall run with given experimental conditions).
        \item The method for calculating the error bars should be explained (closed form formula, call to a library function, bootstrap, etc.)
        \item The assumptions made should be given (e.g., Normally distributed errors).
        \item It should be clear whether the error bar is the standard deviation or the standard error of the mean.
        \item It is OK to report 1-sigma error bars, but one should state it. The authors should preferably report a 2-sigma error bar than state that they have a 96\% CI, if the hypothesis of Normality of errors is not verified.
        \item For asymmetric distributions, the authors should be careful not to show in tables or figures symmetric error bars that would yield results that are out of range (e.g., negative error rates).
        \item If error bars are reported in tables or plots, the authors should explain in the text how they were calculated and reference the corresponding figures or tables in the text.
    \end{itemize}

\item {\bf Experiments compute resources}
    \item[] Question: For each experiment, does the paper provide sufficient information on the computer resources (type of compute workers, memory, time of execution) needed to reproduce the experiments?
    \item[] Answer: \answerYes{}.
    \item[] Justification: Computational resources required for the synthetic simulations are specified in \Cref{app:computational:resources}. 
    \item[] Guidelines:
    \begin{itemize}
        \item The answer \answerNA{} means that the paper does not include experiments.
        \item The paper should indicate the type of compute workers CPU or GPU, internal cluster, or cloud provider, including relevant memory and storage.
        \item The paper should provide the amount of compute required for each of the individual experimental runs as well as estimate the total compute. 
        \item The paper should disclose whether the full research project required more compute than the experiments reported in the paper (e.g., preliminary or failed experiments that didn't make it into the paper). 
    \end{itemize}
    
\item {\bf Code of ethics}
    \item[] Question: Does the research conducted in the paper conform, in every respect, with the NeurIPS Code of Ethics \url{https://neurips.cc/public/EthicsGuidelines}?
    \item[] Answer: \answerYes{}.
    \item[] Justification: We respected the  NeurIPS Code of Ethics. 
    \item[] Guidelines:
    \begin{itemize}
        \item The answer \answerNA{} means that the authors have not reviewed the NeurIPS Code of Ethics.
        \item If the authors answer \answerNo, they should explain the special circumstances that require a deviation from the Code of Ethics.
        \item The authors should make sure to preserve anonymity (e.g., if there is a special consideration due to laws or regulations in their jurisdiction).
    \end{itemize}

\item {\bf Broader impacts}
    \item[] Question: Does the paper discuss both potential positive societal impacts and negative societal impacts of the work performed?
    \item[] Answer: \answerYes{}.
    \item[] Justification:  We provide a dedicated discussion on the societal implications of our work in \Cref{sec:introduction}, \Cref{sec:numerical:experiments} and \Cref{sec:conclusion}. Specially, we highlight how our set-valued policy approach enhances safety in clinical adoption (\Cref{sec:introduction}). By providing a range of reliable treatments rather than a single treatment decision, our method allows for context-aware decisions. In our IVF application (\Cref{sec:numerical:experiments}), we discuss how large set cardinalities may impose a decision burden, representing a potential drawback of the approach. We explicitly discuss the limits of our methods and guidelines for practical implementations in \Cref{sec:conclusion}. 
    \item[] Guidelines:
    \begin{itemize}
        \item The answer \answerNA{} means that there is no societal impact of the work performed.
        \item If the authors answer \answerNA{} or \answerNo, they should explain why their work has no societal impact or why the paper does not address societal impact.
        \item Examples of negative societal impacts include potential malicious or unintended uses (e.g., disinformation, generating fake profiles, surveillance), fairness considerations (e.g., deployment of technologies that could make decisions that unfairly impact specific groups), privacy considerations, and security considerations.
        \item The conference expects that many papers will be foundational research and not tied to particular applications, let alone deployments. However, if there is a direct path to any negative applications, the authors should point it out. For example, it is legitimate to point out that an improvement in the quality of generative models could be used to generate Deepfakes for disinformation. On the other hand, it is not needed to point out that a generic algorithm for optimizing neural networks could enable people to train models that generate Deepfakes faster.
        \item The authors should consider possible harms that could arise when the technology is being used as intended and functioning correctly, harms that could arise when the technology is being used as intended but gives incorrect results, and harms following from (intentional or unintentional) misuse of the technology.
        \item If there are negative societal impacts, the authors could also discuss possible mitigation strategies (e.g., gated release of models, providing defenses in addition to attacks, mechanisms for monitoring misuse, mechanisms to monitor how a system learns from feedback over time, improving the efficiency and accessibility of ML).
    \end{itemize}
    
\item {\bf Safeguards}
    \item[] Question: Does the paper describe safeguards that have been put in place for responsible release of data or models that have a high risk for misuse (e.g., pre-trained language models, image generators, or scraped datasets)?
    \item[] Answer:  \answerNA{}.
    \item[] Justification:  The paper poses no such risks.
    \item[] Guidelines:
    \begin{itemize}
        \item The answer \answerNA{} means that the paper poses no such risks.
        \item Released models that have a high risk for misuse or dual-use should be released with necessary safeguards to allow for controlled use of the model, for example by requiring that users adhere to usage guidelines or restrictions to access the model or implementing safety filters. 
        \item Datasets that have been scraped from the Internet could pose safety risks. The authors should describe how they avoided releasing unsafe images.
        \item We recognize that providing effective safeguards is challenging, and many papers do not require this, but we encourage authors to take this into account and make a best faith effort.
    \end{itemize}

\item {\bf Licenses for existing assets}
    \item[] Question: Are the creators or original owners of assets (e.g., code, data, models), used in the paper, properly credited and are the license and terms of use explicitly mentioned and properly respected?
    \item[] Answer:\answerYes{}.
    \item[] Justification: Packages used are credited in \Cref{sec:numerical:experiments} and \Cref{app:results}. 
    \item[] Guidelines:
    \begin{itemize}
        \item The answer \answerNA{} means that the paper does not use existing assets.
        \item The authors should cite the original paper that produced the code package or dataset.
        \item The authors should state which version of the asset is used and, if possible, include a URL.
        \item The name of the license (e.g., CC-BY 4.0) should be included for each asset.
        \item For scraped data from a particular source (e.g., website), the copyright and terms of service of that source should be provided.
        \item If assets are released, the license, copyright information, and terms of use in the package should be provided. For popular datasets, \url{paperswithcode.com/datasets} has curated licenses for some datasets. Their licensing guide can help determine the license of a dataset.
        \item For existing datasets that are re-packaged, both the original license and the license of the derived asset (if it has changed) should be provided.
        \item If this information is not available online, the authors are encouraged to reach out to the asset's creators.
    \end{itemize}

\item {\bf New assets}
    \item[] Question: Are new assets introduced in the paper well documented and is the documentation provided alongside the assets?
    \item[] Answer: \answerYes{}.
    \item[] Justification: The R-package implementation is fully documented and provided at \href{https://anonymous.4open.science/r/setValuedPolicyLearning-6B9C/}{\texttt{anonymized-setValuedPolicyLearning}}. The package includes a comprehensive README, a vignette and documentation for all exported functions. We also provide example scripts to replicate the core results presented in the paper.
    \item[] Guidelines:
    \begin{itemize}
        \item The answer \answerNA{} means that the paper does not release new assets.
        \item Researchers should communicate the details of the dataset\slash code\slash model as part of their submissions via structured templates. This includes details about training, license, limitations, etc. 
        \item The paper should discuss whether and how consent was obtained from people whose asset is used.
        \item At submission time, remember to anonymize your assets (if applicable). You can either create an anonymized URL or include an anonymized zip file.
    \end{itemize}

\item {\bf Crowdsourcing and research with human subjects}
    \item[] Question: For crowdsourcing experiments and research with human subjects, does the paper include the full text of instructions given to participants and screenshots, if applicable, as well as details about compensation (if any)? 
    \item[] Answer: \answerNA{}.
    \item[] Justification: Not applicable. This study did not involve crowdsourcing experiments. The IVF data was a retrospective review of existing hospital records.
    \item[] Guidelines:
    \begin{itemize}
        \item The answer \answerNA{} means that the paper does not involve crowdsourcing nor research with human subjects.
        \item Including this information in the supplemental material is fine, but if the main contribution of the paper involves human subjects, then as much detail as possible should be included in the main paper. 
        \item According to the NeurIPS Code of Ethics, workers involved in data collection, curation, or other labor should be paid at least the minimum wage in the country of the data collector. 
    \end{itemize}

\item {\bf Institutional review board (IRB) approvals or equivalent for research with human subjects}
    \item[] Question: Does the paper describe potential risks incurred by study participants, whether such risks were disclosed to the subjects, and whether Institutional Review Board (IRB) approvals (or an equivalent approval/review based on the requirements of your country or institution) were obtained?
    \item[] Answer: \answerYes{}.
    \item[] Justification:  Data access has been granted after the submission and review of research protocols to the appropriate Institutional Review Board (IRB) and the Data Protection Officer (DPO) validation. This step allows ensuring that the use of clinical data is regulated according to the CNIL and the GPRD.  IRBs and DPO verify that protocols respect national legislation regarding the protection of patient data privacy and codes of ethics and good conduct of research. The application of our methods to IVF data was conducted by a project partner who established formal agreements with the participating medical centers
    \item[] Guidelines:
    \begin{itemize}
        \item The answer \answerNA{} means that the paper does not involve crowdsourcing nor research with human subjects.
        \item Depending on the country in which research is conducted, IRB approval (or equivalent) may be required for any human subjects research. If you obtained IRB approval, you should clearly state this in the paper. 
        \item We recognize that the procedures for this may vary significantly between institutions and locations, and we expect authors to adhere to the NeurIPS Code of Ethics and the guidelines for their institution. 
        \item For initial submissions, do not include any information that would break anonymity (if applicable), such as the institution conducting the review.
    \end{itemize}

\item {\bf Declaration of LLM usage}
    \item[] Question: Does the paper describe the usage of LLMs if it is an important, original, or non-standard component of the core methods in this research? Note that if the LLM is used only for writing, editing, or formatting purposes and does \emph{not} impact the core methodology, scientific rigor, or originality of the research, declaration is not required.
    \item[] Answer: \answerNA{}.
    \item[] Justification:  LLM is used only for writing, editing, or formatting purposes.
    \item[] Guidelines:
    \begin{itemize}
        \item The answer \answerNA{} means that the core method development in this research does not involve LLMs as any important, original, or non-standard components.
        \item Please refer to our LLM policy in the NeurIPS handbook for what should or should not be described.
    \end{itemize}

\end{enumerate}